\documentclass[letterpaper]{article} % DO NOT CHANGE THIS
\usepackage{aaai2027}
\usepackage[hyphens]{url}  % DO NOT CHANGE THIS
\usepackage{graphicx} % DO NOT CHANGE THIS
\graphicspath{{aaai2027/figures/}{figures/}}
\usepackage{natbib}  % DO NOT CHANGE THIS AND DO NOT ADD ANY OPTIONS TO IT
\usepackage{caption} % DO NOT CHANGE THIS AND DO NOT ADD ANY OPTIONS TO IT
\usepackage{algorithm}
\usepackage{algorithmic}
\usepackage{amsmath}
\usepackage{amssymb}
\usepackage{amsthm}
\usepackage{xcolor}

\newtheorem{proposition}{Proposition}
\newtheorem{theorem}{Theorem}
\newtheorem{lemma}{Lemma}

\newif\ifshowreviewnotes
\showreviewnotesfalse
\ifshowreviewnotes
  \newcommand{\todo}[1]{\textcolor{red}{[todo: #1]}}
  \newcommand{\addressed}[1]{\textcolor{blue}{[addressed: #1]}}
\else
  \newcommand{\todo}[1]{}
  \newcommand{\addressed}[1]{}
\fi

\usepackage{booktabs}
\usepackage{multirow}

\title{StochSIPP: Safe Interval Path Planning in Stochastic Dynamic Environments}
\author{
    Ajith Kemisetti\textsuperscript{\rm 1},
    Shahaf S. Shperberg\textsuperscript{\rm 2},
    Yoonchang Sung\textsuperscript{\rm 3}\corresponding
}
\affiliations{
    \textsuperscript{\rm 1}The University of Texas at Austin\\
    \textsuperscript{\rm 2}Ben Gurion University of the Negev\\
    \textsuperscript{\rm 3}Nanyang Technological University\\
}

\begin{document}

\maketitle

\begin{abstract}
Safe navigation under uncertain time-dependent blockage requires anticipating observations before committing to motion. We present StochSIPP, an exact contingent planner for temporal roadmaps with uncertain edge and vertex statuses revealed locally during execution. StochSIPP uses SIPP to generate certified-safe macro-actions that terminate at the next observation or the goal, and bounded AND/OR search over a cached action--observation graph to select actions for every reachable observation outcome. Optimistic and robust SIPP relaxations provide admissible lower and upper bounds for bounded AND/OR search. When every interval declared deterministically safe is truly safe, sensing is exact, and execution follows the planned timing, the resulting policy is provably collision-free. With correct independent probabilities and complete action and outcome generation, it minimizes expected arrival time within the roadmap and horizon. Experiments on controlled roadmap instances show that StochSIPP preserves the observed success of safe fixed-path baselines while reducing arrival time, and solves gated scenarios in which conservative fixed-path planners return no plan. A scalability study further reveals rapid growth as the number of simultaneously observed uncertain statuses increases.
\end{abstract}

% Uncomment the following to link to your code, datasets, an extended version or similar.
% You must keep this block between (not within) the abstract and the main body of the paper.
% Make sure that you do not de-anonymize yourself with these links.
% \begin{links}
%     \link{Code}{https://aaai.org/example/code}
%     \link{Datasets}{https://aaai.org/example/datasets}
%     \link{Extended version}{https://aaai.org/example/extended-version}
% \end{links}

%%%%%%%%%%%%%%%%%
\section{Introduction}
\label{sec:intro}

Safe navigation among moving obstacles is a fundamental requirement for autonomous robots operating in human environments, warehouses, and other dynamic spaces. This problem becomes particularly challenging when obstacle motion is uncertain. A robot may have prior knowledge of likely motion patterns, but the trajectory that will actually occur may not be known until execution. Planning only for a nominal prediction can therefore lead to collisions, while treating all possible obstacle motions as simultaneously present can be unnecessarily conservative.

Existing approaches plan from a nominal prediction, conservatively account for
several possible motions~\cite{castillo2020real,de2021scenario,liu2023radius}, or
react after behavior is
observed~\cite{koenig2002d,likhachev2005anytime,park2012itomp}. We study a
distinct input model: a predictor supplies fixed safety
probabilities for specified roadmap edge and vertex uses over time. Each
uncertain interval has an episode-fixed safe or blocked status. The robot
senses locally relevant statuses before motion; observing one status does not
update unrelated probabilities. The planner therefore does not infer a
complete obstacle-trajectory realization.

The objective is a contingent policy, rather than a single path under an
expected obstacle trajectory. The policy anticipates future local
observations and specifies which certified-safe motion to execute after each
observation. Figure~\ref{fig:gazebo} illustrates this setting with a warehouse example, where the robot must decide whether to approach an aisle whose availability is not yet known.

\begin{figure}[t]
\centering
\includegraphics[width=0.9\columnwidth]{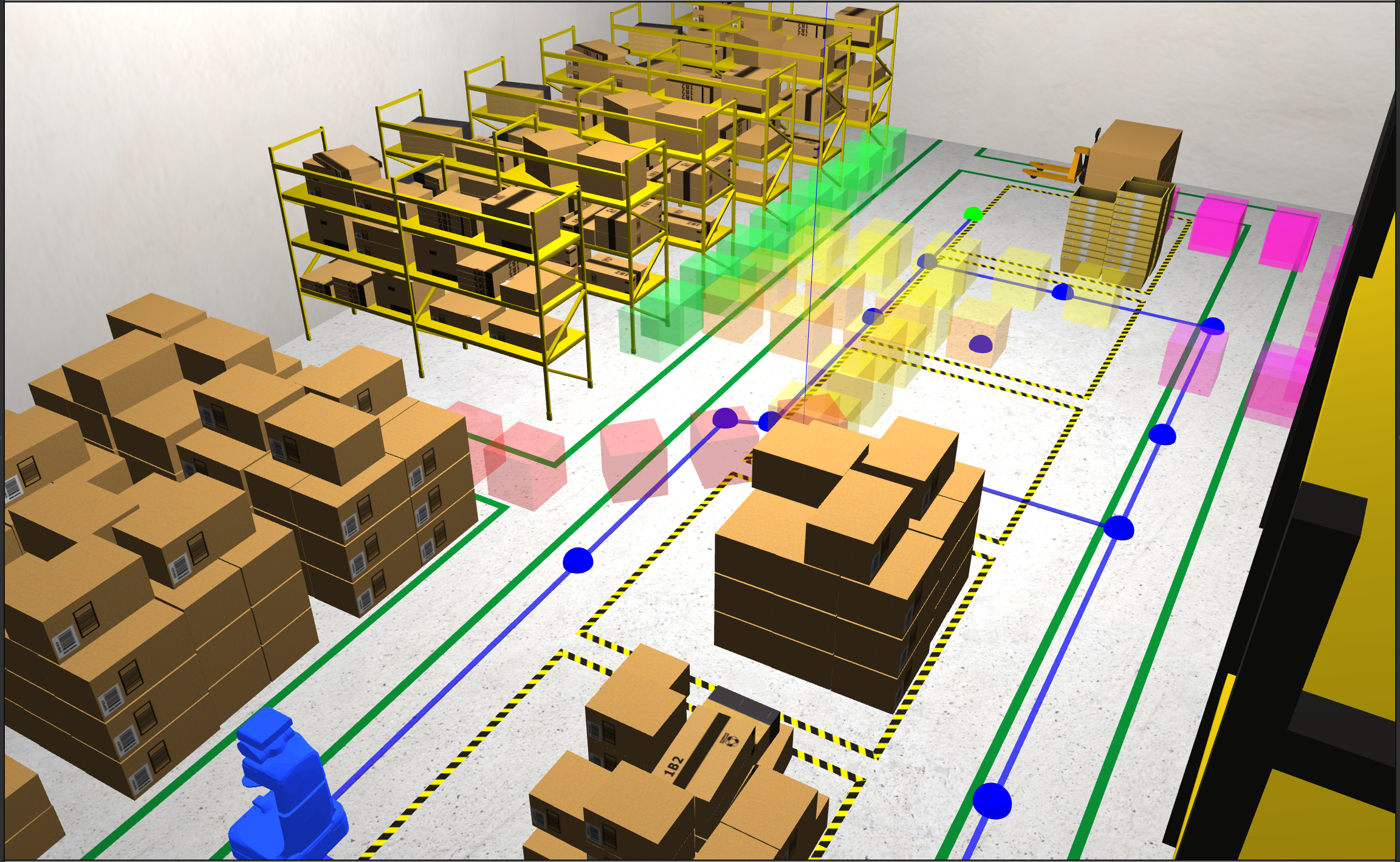}
\caption{A robot executing a StochSIPP policy in the Gazebo warehouse. The blue dots and lines correspond to roadmap vertices and edges, respectively, and the green dot marks the goal. The overlaid trajectories illustrate possible realizations of the moving obstacles.}
\label{fig:gazebo}
\end{figure}

Our approach builds on Safe Interval Path Planning
(SIPP~\cite{Phillips2011SIPPSI}), which plans around known dynamic-obstacle
trajectories by representing collision-free time intervals. We associate
roadmap edge and vertex intervals with supplied safety probabilities, thereby
representing uncertain temporal availability without expanding every
timestep in the search graph.

The model generalizes the Canadian Traveler Problem
(CTP~\cite{bar1991canadian}): time-indexed uncertain edge and vertex uses
replace CTP's uncertain roads, and local observations reveal their statuses. Unlike static road uncertainty, temporal uncertainty couples route choice, arrival time, waiting, and future information: reaching the same vertex at different times can produce different feasible motions and observation opportunities. OR nodes
represent decisions, AND nodes represent observation outcomes, and a solution
subgraph selects a certified-safe motion for every reachable outcome.

This paper makes three contributions. First, it defines a finite model of independent interval-status variables and a lossless SIPP macro-action abstraction for exact contingent search without explicit time expansion. Second, it derives admissible SIPP lower and upper bounds, sound
pruning rules, collision-free execution conditions, and finite-roadmap
optimality; the decision problem is PSPACE-hard. Third, it evaluates the
value of observing one uncertain gate before route selection and measures the
growth caused by explicit simultaneous-outcome branching.

On fifty controlled maps, StochSIPP matched the observed success of every safe
fixed-path baseline while reducing its mean arrival time, whereas the
risk-taking baselines recorded shorter durations but failed in about half of the
trials. When every route to the goal contained an uncertain gate, the conservative baselines returned no plan at all.

%%%%%%%%%%%%%%%%%
\section{Related Work and Background}
\label{sec:related}

This section positions StochSIPP relative to dynamic-obstacle planning, SIPP,
and contingent graph search. Reactive controllers and incremental planners use
current observations or revise an existing
path~\cite{fiorini1998motion,yang2019sampling,majumdar2017funnel,otte2016rrtx,ojha2025dynamic},
while prediction-based methods plan from nominal trajectories, chance
constraints, or updated
predictions~\cite{kuderer2012feature,castillo2020real,park2012itomp}. StochSIPP
addresses the complementary setting in which fixed temporal safety probabilities
and local visibility are supplied before planning.

SIPP represents known dynamic-obstacle trajectories through maximal
collision-free intervals~\cite{Phillips2011SIPPSI}. Its extensions cover
anytime, multi-agent, kinodynamic, and manipulator
planning~\cite{sippvariant,sippmapf,ali2023safe,kerimov2025safe}.
Any-start-time SIPP
precomputes arrival-time functions that can answer cost-to-go queries for
different departure times~\cite{thomas2023anystarttime}; StochSIPP uses
deterministic relaxations of these functions as bounds.

CTP asks for a shortest-path policy when uncertain edge statuses are revealed
locally~\cite{bar1991canadian}. Its AND--OR representation alternates choices
with observations, and admissible AND--OR search solves it
exactly~\cite{aostar_paper,cao_paper}. Stochastic CTP with independent edge
statuses is PSPACE-complete~\cite{fried2013complexity}. StochSIPP extends this
structure with time-indexed edge and vertex statuses, positive motion
durations, waiting, and a finite horizon. These temporal features invalidate ordinary earliest-arrival dominance and require preserving endpoint times at which different observations and subsequent actions become available. Rather than explicitly time-expanding this temporal problem into an ordinary CTP instance, StochSIPP uses SIPP to generate lossless certified-safe macro-actions and reusable cost bounds within the contingent search.

%%%%%%%%%%%%%%%%%
\section{Problem Formulation}
\label{sec:prob}

This section defines the finite stochastic model solved by StochSIPP. The
input gives a safety probability for every relevant roadmap use and time.
Sensing stores an observed status in $y$ and leaves the parameters of all
unobserved variables unchanged; the model never maintains a distribution over
complete obstacle trajectories.

\subsection{Finite Temporal Roadmap}

The finite directed roadmap is
$G=(V,E,W)$, where the vertices $V$ are robot configurations that avoid
all static obstacles, $E\subseteq V\times V$ contains statically feasible
motion primitives, and $W:E\rightarrow\mathbb{N}_{>0}$ gives their integer
durations. Every $q\in V$ has a unit-duration wait self-loop
$w_q=(q,q)\in E$, with $W(w_q)=1$. The start and goal satisfy
$q_s,q_g\in V$. Time belongs to the finite set
\[
    \mathcal{T}=\{0,1,\ldots,T_{\max}\},
\]
where $T_{\max}\in\mathbb{N}_{>0}$ is the planning horizon. An edge $e$ may
depart at time $t$ only if $t+W(e)\leq T_{\max}$. Edges with
$W(e)>T_{\max}$ are omitted, so every edge has at least one valid departure
time.

A vertex safety event at time $t$ states whether the robot may occupy $q$ at
$t$; an edge safety event at departure time $t$ states whether the complete
execution of $e$ over $[t,t+W(e))$ is collision-free. The event for a wait
self-loop therefore covers the complete unit-time occupancy, not only its
endpoints.

\subsection{Independent Bernoulli Safety Variables}

For a roadmap object $z\in V\cup E$, let $\mathcal{D}_z$ be its set of valid
times: $\mathcal{D}_q=\mathcal{T}$ for a vertex, and
$\mathcal{D}_e=\{t\in\mathcal{T}:t+W(e)\leq T_{\max}\}$ for an edge.
The input partitions $\mathcal{D}_z$ into consecutive integer intervals
$I_z^1,\ldots,I_z^{K_z}$. For $t\in\mathcal{D}_z$, let
$\kappa(z,t)$ be the unique index $k$ for which $t\in I_z^k$. Each interval
has one binary safety variable $X_z^k\sim\operatorname{Bernoulli}(p_z^k)$,
where $p_z^k\in[0,1]$ is its safety probability. All interval variables are
mutually independent. Their values are sampled once at the start of an
episode and remain fixed during that episode. A use of object $z$ at any
$t\in I_z^k$ is safe exactly when $X_z^k=1$.

Per-timestep marginals can be entered as singleton intervals
$I_z^k=\{t\}$, but the model then assumes their statuses are independent. A
longer interval is valid only when its times share one Bernoulli status; equal
probabilities at consecutive times do not imply a shared realized status and so
do not justify merging those times. If the supplied model contains only edge
uncertainty, all vertex variables are deterministically one.

Let
$
    \mathcal{U}
    =
    \{(z,k):z\in V\cup E,\ 1\leq k\leq K_z,\
      0<p_z^k<1\}
$
be the set of uncertain variables, and let $M=|\mathcal{U}|$. Variables
with $p_z^k=1$ are declared deterministically safe, and variables with $p_z^k=0$ are declared deterministically blocked. We assume initial occupancy is safe:
$p_{q_s}^{\kappa(q_s,0)}=1$.

\subsection{Information State and Local Observations}

An information assignment is a partial function
$y:\mathcal{U}\rightharpoonup\{0,1\}$. Its domain
$\operatorname{dom}(y)$ contains the uncertain variables already observed;
$y(z,k)=1$ and $y(z,k)=0$ mean safe and blocked, respectively. An interval
index $(z,k)$ is \emph{certified safe} under $y$ exactly when
$
    p_z^k=1
    \quad\text{or}\quad
    \bigl((z,k)\in\operatorname{dom}(y)\ \text{and}\ y(z,k)=1\bigr)$.
An unobserved uncertain variable cannot be used, and by independence observing
one variable does not change the probability of any other.

For $(q,t)\in V\times\mathcal{T}$, the visibility set
$\mathcal{R}(q,t)\subseteq\mathcal{U}$ contains the variables whose values
can be sensed from $q$ at time $t$, and the newly observed set is
$\mathcal{N}(q,t,y)=\mathcal{R}(q,t)\setminus\operatorname{dom}(y)$.
The sensing model must expose every uncertain status before the
corresponding primitive motion. For an edge $e=(q,r)$ that can depart at
$t$, let
$\mathcal{L}(e,t)=\{(e,\kappa(e,t)),(r,\kappa(r,t+W(e)))\}$ be the pair of
interval indices it uses. We assume
$\mathcal{L}(e,t)\cap\mathcal{U}\subseteq\mathcal{R}(q,t)$
for every valid departure. This includes wait-edge safety and the vertex
occupancy at the end of a wait. The robot senses all variables in
$\mathcal{N}(q,t,y)$ before selecting its next motion, including at
$(q_s,0)$ and after every primitive edge execution.

An observation is an assignment
$\omega:\mathcal{N}(q,t,y)\rightarrow\{0,1\}$. For $j=(z,k)$, use the
shorthands $X_j=X_z^k$ and $p_j=p_z^k$. The observation probability is
\[
    O(\omega\mid q,t,y)
    =
    \prod_{j\in\mathcal{N}(q,t,y)}
      p_j^{\omega(j)}(1-p_j)^{1-\omega(j)}.
\]
The empty product equals one, and every listed variable has probability
strictly between zero and one, so every binary assignment has positive
probability. The post-observation assignment is $y'=y\cup\omega$.

\subsection{States, Actions, and Objective}

A planning state is a tuple $s=(q,t,y)$, where $q\in V$,
$t\in\mathcal{T}$, and the current vertex interval
$(q,\kappa(q,t))$ is certified safe. Every non-goal state is
post-observation and satisfies $\mathcal{N}(q,t,y)=\emptyset$; a goal state
is terminal and requires no observation. An action is an endpoint pair
$a=(q',t')$, with $q'\in V$ and $t<t'\leq T_{\max}$, together with a stored
scheduled motion
$
    \sigma(s,a)
    =
    \bigl((q_0,t_0),e_0,\ldots,e_{m-1},(q_m,t_m)\bigr)$,
with $m\in\mathbb{N}_{>0}$, $(q_0,t_0)=(q,t)$, $(q_m,t_m)=(q',t')$, and, for
every $i<m$, $e_i=(q_i,q_{i+1})\in E$ and $t_{i+1}=t_i+W(e_i)$.

The schedule is applicable under $y$ if (i) every edge interval used at $t_i$
and every destination-vertex interval used at $t_{i+1}$ is certified safe under
$y$; (ii) $\mathcal{N}(q_i,t_i,y)=\emptyset$ and $q_i\neq q_g$ for every
internal arrival index $i\in\{1,\ldots,m-1\}$; and (iii) $q'=q_g$ or
$\mathcal{N}(q',t',y)\neq\emptyset$. Conditions (i) and (ii) together with the
sensing assumption make plan-time certification well defined: an empty newly
observed set at an internal arrival puts every uncertain status that the next
edge uses in $\operatorname{dom}(y)$ already.
Thus, a macro-action stops at the first new observation event or at the
goal. The complete action set $\mathcal{A}(s)$ contains one representative
schedule for every applicable endpoint pair: schedules sharing $(q',t')$ have
the same elapsed cost and next observation law, so one fixed representative
suffices. All applicable endpoint times are retained; earliest-arrival
dominance is not assumed.

Executing $a\in\mathcal{A}(s)$ is deterministic and reaches the
pre-observation tuple
$
    \bar{s}_a=(q',t',y)
$
at cost $c(s,a)=t'-t$. If $q'\neq q_g$, outcome $\omega$ produces the next
decision state
$s_a^\omega=(q',t',y\cup\omega)$. A goal endpoint is terminal and incurs no
stochastic observation; for Bellman bookkeeping below, it has one
deterministic terminal outcome.

A deterministic \emph{macro policy} maps every reachable non-goal planning
state to an action in $\mathcal{A}(s)$. For comparison, the underlying
\emph{primitive model} selects one edge $e=(q,r)\in E$ after mandatory
sensing; it is applicable at $(q,t,y)$ exactly when
$t+W(e)\leq T_{\max}$ and both
$(e,\kappa(e,t))$ and $(r,\kappa(r,t+W(e)))$ are certified safe, and it
incurs cost $W(e)$ and reaches $(r,t+W(e),y)$. A deterministic primitive
policy maps every reachable history ending in a non-goal post-observation
state to an applicable edge; a Markov primitive policy depends only on the
terminal $(q,t,y)$.

A policy of either class is \emph{proper} if every positive-probability
observation branch reaches $q_g$ no later than $T_{\max}$. An improper
policy has value infinity. For a proper policy and $s=(q,t,y)$, let $t_g$
be its goal-arrival time and define
$
    J^\pi(s)=\mathbb{E}[t_g-t]$,
where the expectation uses the independent probabilities of the unresolved
variables. Let $J_{\mathrm{mac}}^*(s)$ and
$J_{\mathrm{prim}}^*(s)$ be the minima over proper macro and primitive
policies, respectively, with an empty minimum equal to infinity. StochSIPP
optimizes the macro value, so the algorithm and analysis below write
$J^*(s)=J_{\mathrm{mac}}^*(s)$. Lemma~\ref{lem:macro_lossless} proves that
this value also equals $J_{\mathrm{prim}}^*(s)$.

Let $y_\emptyset$ denote the empty assignment and let
$\Omega_0^+$ contain all observations of
$\mathcal{N}(q_s,0,y_\emptyset)$. Initial sensing gives
\[
    J_0^*
    =
    \sum_{\omega_0\in\Omega_0^+}
      O(\omega_0\mid q_s,0,y_\emptyset)
      J^*((q_s,0,y_\emptyset\cup\omega_0)).
\]
If the initial visibility set is empty, this reduces to
$J^*((q_s,0,y_\emptyset))$.

%%%%%%%%%%%%%%%%%
\section{StochSIPP}
\label{sec:stochsipp}

This section defines the finite action--observation graph searched by
StochSIPP. SIPP generates certified-safe macro-actions and reusable cost
bounds, while bounded AND--OR search computes the contingent policy.
Exactness is relative to the finite roadmap, horizon, independent Bernoulli
model, and sensing assumptions defined above. StochSIPP takes the interval
partition, probabilities, and visibility sets as input; estimating them from
data or from an external predictor is outside the planning problem studied
here.

\label{subsec:ctp_reduction}
An interval variable plays the decision role of a classical CTP edge status, but
applies to a specified range of edge-departure or vertex-occupancy times, so one
roadmap edge may have different statuses in different intervals. Classical
independent stochastic CTP is the special case with one uncertain interval per
edge spanning the horizon, deterministic vertex occupancies, and incident-edge
sensing. Both cases operate directly on interval-status probabilities, without
maintaining complete obstacle-trajectory hypotheses.

\subsection{Finite Action--Observation Graph} \label{subsec:andor}

Each OR node represents a decision state $s=(q,t,y)$. An action
$a=(q_a,t_a)\in\mathcal{A}(s)$ has the stored schedule
$\sigma(s,a)$ defined above, incurs cost $c(s,a)=t_a-t$, and reaches
$\bar{s}_a=(q_a,t_a,y)$. For a non-goal endpoint, let
$\Omega^+(\bar{s}_a)$ contain every assignment over
$\mathcal{N}(q_a,t_a,y)$. We use the shorthand
$
    O(\omega\mid\bar{s}_a)
    :=
    O(\omega\mid q_a,t_a,y)$.
Each $\omega\in\Omega^+(\bar{s}_a)$ produces the successor
$s_a^\omega=(q_a,t_a,y\cup\omega)$. A goal endpoint has
$\Omega^+(\bar{s}_a)=\{\omega_g\}$,
$O(\omega_g\mid\bar{s}_a)=1$, and terminal successor
$s_a^{\omega_g}=(q_g,t_a,y)$.

The pair $(s,a)$ is an AND node with one outgoing arc for every outcome.
OR nodes are cached by $(q,t,y)$, AND nodes are cached by $(s,a)$, and every
cached node records all parents. Since each action satisfies $t_a>t$, the
resulting finite graph is acyclic.

Let $Q^*(s,a)$ be the optimal cost among policies whose first action at $s$
is $a$. The Bellman equations are
\begin{align}
    Q^*(s,a)
    &=
    c(s,a)
    +
    \sum_{\omega\in\Omega^+(\bar{s}_a)}
      O(\omega\mid\bar{s}_a)J^*(s_a^\omega),
    \label{eq:bellman-q}\\
    J^*(s)
    &=
    \begin{cases}
        0, & q=q_g,\\
        \min_{a\in\mathcal{A}(s)}Q^*(s,a),
            & q\neq q_g,\ \mathcal{A}(s)\neq\emptyset,\\
        \infty, & q\neq q_g,\ \mathcal{A}(s)=\emptyset.
    \end{cases}
    \label{eq:bellman-j}
\end{align}
All outcome probabilities in the sum are positive. We use extended-real
arithmetic, so a positive-probability infinite successor makes the action
value infinite.

Initial sensing is represented by a zero-cost chance root $r$ with one child
$(q_s,0,y_\emptyset\cup\omega_0)$ per $\omega_0\in\Omega_0^+$, of arc
probability $O(\omega_0\mid q_s,0,y_\emptyset)$. A solution subgraph retains all
initial outcomes, one action at every reachable non-goal OR node, and all
outcomes of every retained AND node.

\subsection{SIPP-Based Admissible Bounds} \label{subsec:sipp_bounds}

This subsection constructs bounds that can be cached independently of $y$.
The \emph{optimistic roadmap} treats every uncertain interval as safe, even
if $y$ records it as blocked. The \emph{robust roadmap} treats every
uncertain interval as blocked, even if $y$ records it as safe. Intervals
with probability zero or one retain their deterministic status.

Let $A_q^{\mathrm{opt}}(t)$ and $A_q^{\mathrm{rob}}(t)$ be the earliest
absolute goal-arrival times returned by any-start-time SIPP on the two
roadmaps~\cite{thomas2023anystarttime}. The query admits the already occupied
start pair $(q,t)$ because it is certified safe in every reachable state.
After that pair, a robust path may use only probability-one edge and vertex
intervals. Define
$
    h^{\mathrm{opt}}(q,t)
    =
    A_q^{\mathrm{opt}}(t)-t$,
$
    h^{\mathrm{rob}}(q,t)
    =
    A_q^{\mathrm{rob}}(t)-t$,
where an infeasible query has value $\infty$. Every feasible policy branch
is present in the optimistic roadmap, and a finite robust route is a policy
witness that ignores observations and remains safe for every outcome, so
these two quantities bracket $J^*(s)$ at every reachable state
(Proposition~\ref{prop:admissible_bounds}).

An unexpanded non-goal OR node $s=(q,t,y)$ is initialized with
$
    \ell(s)=h^{\mathrm{opt}}(q,t)$,
$
    u(s)=h^{\mathrm{rob}}(q,t)$.
Goal nodes have $\ell(s)=u(s)=0$. If
$h^{\mathrm{opt}}(q,t)=\infty$, both bounds equal infinity and the node is a
solved dead end.

After generating an action, its bounds are
\begin{align}
    \underline{Q}(s,a)
    &=
    c(s,a)+
    \sum_{\omega\in\Omega^+(\bar{s}_a)}
      O(\omega\mid\bar{s}_a)\ell(s_a^\omega),
    \label{eq:action-lower}\\
    \overline{Q}(s,a)
    &=
    c(s,a)+
    \sum_{\omega\in\Omega^+(\bar{s}_a)}
      O(\omega\mid\bar{s}_a)u(s_a^\omega).
    \label{eq:action-upper}
\end{align}
Let $\mathcal{A}_{\mathrm{act}}(s)\subseteq\mathcal{A}(s)$ be the actions
not pruned at $s$. An expanded nonterminal node is backed up as
\begin{align}
    \ell(s)
    &=
    \max\!\left\{
      h^{\mathrm{opt}}(q,t),
      \min_{a\in\mathcal{A}_{\mathrm{act}}(s)}
        \underline{Q}(s,a)
    \right\},
    \label{eq:or-lower}\\
    u(s)
    &=
    \min\!\left\{
      h^{\mathrm{rob}}(q,t),
      \min_{a\in\mathcal{A}_{\mathrm{act}}(s)}
        \overline{Q}(s,a)
    \right\}.
    \label{eq:or-upper}
\end{align}
The minimum of an empty set is infinity. A probability-aware expected-value estimate may order expansions, but it is not used as a bound, pruning threshold, or certificate.

\subsection{Bounded Exact Search} \label{subsec:exact_search}

StochSIPP uses a CAO*-style best-first search~\cite{cao_paper}. At an
expanded state $s$, every finite term of $u(s)$ in Eq.~\ref{eq:or-upper} is
the value of some feasible policy, so $u(s)$ doubles as a feasible-policy
witness; write $B(s)=u(s)$. An action is pruned only when
\begin{equation}
    \underline{Q}(s,a)>B(s).
    \label{eq:prune}
\end{equation}
The strict inequality preserves actions tied with the witness.

An AND node is solved exactly when all its positive-probability OR children
are solved. An expanded OR node is solved when it has a solved active action
$a^\star$ such that
\begin{equation}
    \underline{Q}(s,a^\star)
    =
    \overline{Q}(s,a^\star)
    \leq
    \min_{\substack{a\in\mathcal{A}_{\mathrm{act}}(s)\\a\neq a^\star}}
      \underline{Q}(s,a).
    \label{eq:or-solved}
\end{equation}
The algorithm stores $a^\star$ as the certifying action. A non-goal node is
solved at infinity either immediately when
$h^{\mathrm{opt}}(q,t)=\infty$, or after expansion when the complete
generator returns $\mathcal{A}(s)=\emptyset$. The chance root is solved
exactly when all its children are solved; its lower and upper bounds are the
corresponding probability-weighted sums. Multiple certifying actions are resolved by the same total order used for action selection. A goal OR node is created expanded and solved with value zero; every other OR node is created unexpanded and unsolved with the fixed bounds above.

The current lower-bound solution graph contains all children of the chance
root and of each selected AND node, selecting at every expanded unsolved OR
node an active action minimizing $\underline{Q}(s,a)$. Each iteration expands
one unexpanded OR node on this graph and then backs up AND-node bounds and
solved flags, active sets, witnesses, OR-node bounds, selected and certifying
actions, and the chance-root weighted bounds. The backup follows every parent
link in reverse topological order until no stored value changes, so merging
identical states does not omit a backup.

\begin{algorithm}[tb]
\caption{Bounded Exact Search for StochSIPP}
\label{alg:stochsipp}
{\small
\begin{algorithmic}[1]
\REQUIRE Finite model defined in the problem formulation, complete macro-action
generator, and $h^{\mathrm{opt}},h^{\mathrm{rob}}$
\ENSURE An optimal contingent policy or failure
\STATE Initialize OR cache $C_O$ by $(q,t,y)$ and AND cache $C_A$ by $(s,a)$
\STATE Create chance root $r$ with its initial children, arcs, and parent
links, then back up $r$
\WHILE{$r$ is not solved}
    \STATE Select, by a fixed deterministic priority, an unexpanded unsolved
    OR node $v$ on the lower-bound solution graph; mark $v$ expanded
    \STATE Generate complete $\mathcal{A}(s_v)$ and schedules $\sigma(s_v,a)$
    \IF{$\mathcal{A}(s_v)=\emptyset$}
        \STATE Set $\ell(s_v)=u(s_v)=\infty$ and mark $v$ solved
    \ELSE
        \STATE $\mathcal{A}_{\mathrm{act}}(s_v)\gets\mathcal{A}(s_v)$
        \FOR{each $a\in\mathcal{A}(s_v)$}
            \STATE Retrieve or create $(s_v,a)$ in $C_A$ with its schedule,
            cost, arc from $v$, and parent link
            \STATE Retrieve or create each $s_a^\omega$,
            $\omega\in\Omega^+(\bar{s}_a)$, in $C_O$ with its arc probability
            $O(\omega\mid\bar{s}_a)$ and parent link
        \ENDFOR
    \ENDIF
    \STATE Back up $v$ and all ancestors to a fixed point using
    Eqs.~\ref{eq:action-lower}--\ref{eq:or-solved}
\ENDWHILE
\STATE \textbf{if} $\ell(r)=\infty$ \textbf{then} report failure
\STATE \textbf{else} return $s\mapsto(a^\star,\sigma(s,a^\star))$ at every
nonterminal OR node reachable from $r$ via certifying actions and all
outcome arcs
\end{algorithmic}
}
\end{algorithm}

\paragraph{Policy execution.}
The solved subgraph is executed without online replanning: the robot senses at
$(q_s,0)$, follows the corresponding chance-root arc, and then at each OR state
executes the stored schedule of its certifying action, senses at the endpoint,
extends $y$, and follows the outcome arc until it reaches the goal. A sensing
error, an unmodeled status, or a timing deviation violates the policy
assumptions and requires a separate safe-stop or replanning mechanism.

%%%%%%%%%%%%%%%
\section{Theoretical Analysis}
\label{sec:theory}

This section validates the state abstraction, macro-actions, bounds, pruning,
safety, and finite-roadmap optimality under the independent interval model.
Complete proofs of all formal results are in the supplementary material.

\subsection{Computational Hardness}
\label{subsec:hardness}

\begin{theorem}[PSPACE-hardness]
\label{thm:hardness}
When durations, $T_{\max}$, interval endpoints, rational probabilities, and
a rational threshold $C$ use binary encoding, deciding whether
$J_0^*\leq C$ is PSPACE-hard. This holds with deterministic vertex occupancies
and one time-invariant independent Bernoulli variable per uncertain edge.
\end{theorem}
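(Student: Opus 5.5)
We prove the claim by a polynomial-time reduction from stochastic CTP with independent edge statuses, which is PSPACE-complete~\cite{fried2013complexity}. That result is stated for the decision version: given a graph with nonnegative integer edge weights, independent rational blocking probabilities, local sensing of incident edges on arrival, and a rational threshold, decide whether the optimal expected traversal cost is at most the threshold. Following the special-case remark in the paragraph on the CTP reduction, we map a CTP instance to a StochSIPP instance. The construction has one time-invariant uncertain interval per uncertain edge covering its whole valid departure domain, deterministic vertex occupancies, and visibility sets $\mathcal{R}(q,t)$ equal to the variables of the edges incident to $q$. The sensing assumption $\mathcal{L}(e,t)\cap\mathcal{U}\subseteq\mathcal{R}(q,t)$ then holds automatically.

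The construction proceeds in four steps.

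\textbf{Step 1: durations.} Our roadmap requires directed edges with positive integer durations, whereas CTP weights are nonnegative and CTP edges are undirected. We replace each undirected edge by two directed arcs. The two arcs share a single Bernoulli variable, which is allowed because a variable is attached to a roadmap object and we may identify the two directions as uses of one interval. If sharing across objects is disallowed, we instead route both directions through a deterministic-safe midpoint vertex with a single uncertain arc structure. Either way, sensing the variable from both endpoints exposes it before motion. If zero weights occur, we scale all weights by a factor $N$ and add $1$ to each. Choosing $N$ larger than the maximum number of edge traversals in any optimal policy, which is polynomially bounded because each distinct edge status need only be learned once, keeps orderings of costs intact. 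A cleaner alternative is to cite the hardness for strictly positive weights, which the Fried et al.\ construction already uses.

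\textbf{Step 2: horizon.} We set $T_{\max}$ to a bound on the worst-case cost of any sensible CTP policy, for example $|V|^2\cdot\max_e W(e)$ plus slack. This value is polynomial in bit length under binary encoding. It must be chosen so that no optimal CTP policy is cut off. Waiting self-loops only add time without gaining information, since statuses are time-invariant and sensing at a vertex is repeated identically. Hence an optimal macro policy never benefits from waiting, and $J_0^*$ equals the CTP optimum.

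\textbf{Step 3: probabilities.} Safety probability is one minus the blocking probability. Deterministically open or closed CTP edges get $p\in\{0,1\}$. The start vertex is safe.

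\textbf{Step 4: threshold.} The threshold $C$ is carried over, adjusted by the scaling in Step~1 if it was used.

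Correctness uses a bijection between CTP policies and primitive StochSIPP policies. Both have the same information states, namely vertex plus the set of revealed statuses, and the same transition costs. The time component is a deterministic function of the history cost, so it adds no decision power. We then invoke Lemma~\ref{lem:macro_lossless}, giving $J_{\mathrm{mac}}^*=J_{\mathrm{prim}}^*$, so the objective defined with macro-actions agrees with the CTP value.

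\textbf{Main obstacle: the finite horizon and improper policies.} CTP instances may make the goal unreachable under some realizations. Fried et al.\ avoid this, for instance, by assuming a guaranteed deterministic fallback path or a large finite penalty. In our model such branches would instead give value $\infty$. We handle this by adding a deterministic-safe direct arc from the start to the goal, or one from every vertex to the goal, of very large duration $D$, mirroring the standard fallback used in the hardness proof. We must then verify that $T_{\max}$ accommodates paths of the form ``explore then fall back.'' This is where care is needed to keep all numbers polynomial in bit length and to ensure that truncation at $T_{\max}$ never removes a policy that is optimal in the CTP instance.
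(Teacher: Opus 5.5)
Your overall plan is the paper's: reduce from independent stochastic CTP, use one time-invariant interval per uncertain edge and incident-edge visibility, and remove waits. However, the step converting costs to positive integer durations fails as stated.

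Replacing $w(e)$ by $Nw(e)+1$ with $N$ only larger than the number of traversals does not preserve orderings of \emph{expected} costs, and you never fix the new threshold. Take $C'=NC+H$, where $H$ bounds the traversals of an optimal branch. The ``no'' direction then requires $N(\mathbb{E}[C_\pi]-C)>H$ for every policy with $\mathbb{E}[C_\pi]>C$. Expected costs are rationals. Because each variable is observed at most once per branch, their denominators divide the product $D$ of the probability denominators, so $\mathbb{E}[C_\pi]>C$ only guarantees $\mathbb{E}[C_\pi]\geq C+1/D$. You therefore need $N>HD$; the paper takes $K=HD+1$ and $C'=KC+H$. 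With $N=H+1$ and large $D$, a policy of original cost $C+1/D$ can land at or below $C'$, and the reduction answers ``yes'' incorrectly. Your alternative of citing a strictly-positive-weight version of Fried et al.\ is unverified.

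The horizon, which you leave as the main open obstacle, is settled in the paper by the same quantity $H$. An information state is a vertex plus a three-valued label per uncertain edge, so there are at most $H=n3^m$ of them. A branch revisiting one can be shortcut without increasing cost, so some optimal policy is cycle-free with at most $H$ traversals per branch. Hence $T_{\max}=H\max_e W(e)$ truncates no optimal branch. Like $K$, it has polynomial binary length even though its value is exponential, so you need only a polynomial-bit bound, not a polynomial traversal count.

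No fallback arc is required. The correspondence is between proper policies, so instances where the goal can be cut off have infinite value in both models (or are excluded by the source construction). Adding such an arc would change the CTP value you must preserve.

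Finally, two directed arcs cannot share one Bernoulli variable here, because each $X_z^k$ belongs to a single roadmap object and all variables are mutually independent. A midpoint gadget with an uncertain vertex would violate the deterministic-vertex-occupancy clause. Instead, use the fact that Fried et al.'s hardness already holds for directed edges with independent statuses, as the paper does.
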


\begin{proof}[Proof sketch]
Reduce from stochastic CTP with directed edges and independent statuses,
which is PSPACE-complete, as is deciding the optimal expected cost of the
constructed instances~\cite{fried2013complexity}. Rational costs are scaled to
positive integer durations while preserving the expected-cost threshold: a
cycle-free policy has at most $H=n3^m$ traversals, where $n$ is the number of
vertices and $m$ the number of uncertain edges, so the additive traversal term
can be made smaller than the minimum rational expected-cost gap. Represent
each uncertain CTP edge by one time-invariant interval variable, expose
incident variables at the current vertex, and set a finite horizon large
enough for every cycle-free policy. All constructed numerical values are
binary encoded, and each interval is stored by its endpoints. This gives a
cost- and observation-preserving correspondence between proper policies. The
complete scaling argument and encoding bound are given in the supplementary
material.
\end{proof}

\subsection{Correctness, Bounds, and Pruning}
\label{subsec:correctness}

\begin{lemma}[Information-state sufficiency]
\label{lem:sufficiency}
Two histories ending in the same decision state $(q,t,y)$ have the same
applicable actions, action costs, and future observation probabilities. If a
proper primitive policy exists, an optimal deterministic Markov primitive
policy exists on these states.
\end{lemma}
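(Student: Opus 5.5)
The proof has two parts: first the three sufficiency claims, then the existence of an optimal Markov policy by backward induction on time.

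\emph{Sufficiency.} Every claimed quantity is a function of $(q,t,y)$ alone, so we check each one against the definitions.

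\begin{itemize}
\item \textbf{Applicability.} A primitive edge $e=(q,r)$ is applicable iff $t+W(e)\leq T_{\max}$ and both indices in $\mathcal{L}(e,t)$ are certified safe under $y$. Certification depends only on $p_z^k$ and $y$, so applicability depends only on $(q,t,y)$.
\item \textbf{Macro-actions.} Conditions (i)--(iii) of macro-action applicability involve only the fixed roadmap, the time stamps $t_i$ determined by $t$ and the durations, certification under $y$, and the sets $\mathcal{N}(q_i,t_i,y)$. Hence $\mathcal{A}(s)$ and the schedules $\sigma(s,a)$ are functions of $s$.
\item \textbf{Costs.} Costs are $W(e)$ or $t'-t$.
\item \textbf{Observation probabilities.} These follow from the product formula for $O(\omega\mid q,t,y)$ over $\mathcal{N}(q,t,y)$. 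The one step needing care is that, conditioned on any history consistent with $y$, the unobserved variables keep their prior Bernoulli laws and remain mutually independent. This holds because the history reveals exactly the values recorded in $y$: the robot senses only visible variables, and actions use only certified-safe intervals, so traversal reveals nothing further. By independence of the $X_j$, conditioning on $\{X_j=y(j):j\in\operatorname{dom}(y)\}$ leaves the joint law of the other variables unchanged.
\end{itemize}
It follows that the continuation problem from a history ending in $(q,t,y)$ depends only on that state.

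\emph{Existence of an optimal Markov policy.} The reachable state space is finite: $V$, $\mathcal{T}$ and the set of partial assignments over $\mathcal{U}$ are all finite, and every primitive edge strictly increases $t$ because $W(e)\geq1$, wait loops included. Define $V^*(q,t,y)$ by backward induction on $t$ from $T_{\max}$:
\begin{itemize}
\item $V^*=0$ at goal states;
\item at a non-goal state, take the minimum over applicable edges of $W(e)$ plus the expected value of $V^*$ at the successor. The successor is the state after sensing at $(r,t+W(e))$, averaged with $O(\cdot\mid r,t+W(e),y)$, or the terminal goal state if $r=q_g$;
\item set $V^*=\infty$ when no applicable edge exists or every option is infinite.
\end{itemize}
A goal state is reached at time at most $T_{\max}$ by construction, since edges depart only if $t+W(e)\leq T_{\max}$. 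Choose a minimizing edge under a fixed tie-breaking order; this defines a deterministic Markov policy $\pi^*$.

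Two facts then finish the proof. First, by induction on $T_{\max}-t$ using sufficiency, every history-dependent primitive policy has continuation value at least $V^*$ at each state. Second, $\pi^*$ attains $V^*$ and is proper whenever $V^*$ is finite. The base case is $t=T_{\max}$: there no edge can depart, so $V^*$ is $0$ at the goal and $\infty$ elsewhere. If a proper primitive policy exists, the initial expected value is finite. By the first fact $V^*$ is then finite at every positive-probability initial state, so $\pi^*$ is proper and optimal.

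\emph{Main obstacle.} The delicate step is the conditional-independence claim: the posterior over unobserved variables given a history equals the prior. We would argue it by induction over the history. Each sensing event multiplies in the factors $p_j^{\omega(j)}(1-p_j)^{1-\omega(j)}$ for newly observed $j$. Each executed motion is certified safe beforehand, so it has conditional probability one and adds no information; in particular, collision events never arise under certified motions. With this in hand, the rest is standard finite-horizon dynamic programming with extended-real values. Infinite values propagate through positive-probability branches, which matches the definition of improper policies.
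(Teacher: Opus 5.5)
Your proposal is correct and takes essentially the same approach as the paper. The paper likewise shows that, under a fixed deterministic policy, the event of following a given history is determined by the values in $\operatorname{dom}(y)$ (your ``the history reveals exactly $y$'' step), so independence leaves every unobserved variable with its prior $p_j$; it then notes that certification, schedule feasibility, and visibility depend only on $(q,t,y)$ and concludes by backward induction from $T_{\max}$, with your comparison of history-dependent policies against $V^*$ and your properness argument for $\pi^*$ spelling out details the paper leaves implicit.
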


\begin{proof}[Proof sketch]
After conditioning on $y$, mutual independence leaves every unobserved
variable with its original Bernoulli probability, so certification, motion,
and the next visible set depend only on $(q,t,y)$ and the selected action.
Since every primitive step advances time in a finite horizon, backward
induction yields an optimal deterministic Markov policy whenever a proper
policy exists.
\end{proof}

\begin{lemma}[Macro-action losslessness]
\label{lem:macro_lossless}
Assume the macro-action generator includes every applicable endpoint-time
pair. From any reachable state, every proper deterministic Markov primitive
policy induces a StochSIPP macro-action policy with the same goal-arrival time
for every complete status assignment permitted by the model and consistent with that state.
Conversely, expanding each stored schedule induces a primitive policy with the
same pathwise arrival time.
\end{lemma}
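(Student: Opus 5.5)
The plan is to build both directions of the correspondence by segmenting primitive executions at observation events. I treat the forward direction first.

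Fix a reachable decision state $s=(q,t,y)$ and a proper deterministic Markov primitive policy $\pi$. Fix also a complete status assignment $x$ that is permitted by the model and consistent with $y$. Under $x$, $\pi$ generates a deterministic primitive trajectory $(q_0,t_0),e_0,(q_1,t_1),\ldots$ starting at $(q,t)$. Along it, the information assignment changes only at arrivals where $\mathcal{N}(q_i,t_i,y_i)\neq\emptyset$.

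Let $i^\ast\ge 1$ be the first arrival index with either $q_{i^\ast}=q_g$ or $\mathcal{N}(q_{i^\ast},t_{i^\ast},y)\neq\emptyset$. This index exists because $\pi$ is proper and time strictly increases. On the prefix $0,\ldots,i^\ast$ the assignment stays equal to $y$. Each primitive step is applicable, so its edge interval and destination-vertex interval are certified safe under $y$. That gives condition (i). By the choice of $i^\ast$, internal arrivals have empty newly observed sets and are not the goal, which is (ii), and the endpoint satisfies (iii).

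The schedule therefore witnesses an applicable endpoint pair $(q_{i^\ast},t_{i^\ast})$. By completeness of the generator, this pair lies in $\mathcal{A}(s)$. Its stored representative may differ from the prefix, but it has the same endpoint, hence the same elapsed cost and the same next observation set $\mathcal{N}(q_{i^\ast},t_{i^\ast},y)$. The macro policy selects this action at $s$.

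The key check is that this choice does not depend on $x$. Since $\pi$ is Markov and the state stays $(q_i,t_i,y)$ before $i^\ast$, every primitive choice on the prefix depends only on $(q_i,t_i,y)$. So the prefix, and hence the endpoint, is the same for every $x$ consistent with $y$. This makes the macro policy well defined as a map on states.

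After the endpoint, the observation $\omega$ equals $x$ restricted to $\mathcal{N}$, and the successor is $s^\omega$. There I recurse, using $\pi$ restricted to states extending $y\cup\omega$. Lemma~\ref{lem:sufficiency} says histories ending in the same decision state are interchangeable, so the restriction is again a Markov policy. The recursion terminates by induction on $T_{\max}-t$, since every macro-action strictly advances time. Summing segment durations gives identical arrival times for each $x$. Properness transfers: every positive-probability branch of the macro policy corresponds to some consistent $x$, under which $\pi$ reaches $q_g$ by $T_{\max}$.

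For the converse, take a macro policy and concatenate its stored schedules. At internal arrivals of a schedule, the newly observed set is empty. The mandatory sensing step is therefore vacuous, and the next edge is certified safe under $y$ by condition (i), hence primitive-applicable. At a macro endpoint, primitive sensing reveals exactly the macro outcome $\omega$. The resulting history-dependent primitive policy reproduces the same pathwise arrival times. The lemma only requires some primitive policy, and one depending on the current schedule position suffices.

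The main obstacle is the well-definedness step in the forward direction. The concern is that the first observation index and the prefix might depend on the unrevealed statuses in $x$. The argument above settles this: no new information arrives before $i^\ast$, and the Markov property keeps every prefix choice a function of $(q_i,t_i,y)$ alone.

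A second subtlety is the replacement of the actual prefix by the stored representative schedule. This is harmless because cost and the observation law depend only on the endpoint $(q',t')$ and $y$. Primitive safety is also preserved, since the representative satisfies (i) and (ii) by its own applicability.
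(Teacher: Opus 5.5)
Your proposal is correct and follows essentially the same route as the paper: you cut each primitive execution at the first new observation or goal arrival, and you use complete endpoint generation together with the fact that cost and observation law depend only on the endpoint and $y$ to swap in the stored representative. You then recover a primitive controller from the stored schedules by keeping the current schedule as finite memory, and your explicit arguments that the pre-observation prefix does not depend on unrevealed statuses (so the induced macro policy is well defined on states) and that properness transfers spell out steps the paper leaves implicit.
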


\begin{proof}[Proof sketch]
Partition each primitive-policy execution at its first new observation or
goal arrival. Complete endpoint generation represents every resulting
certified segment as a macro-action, and replacing a segment by another stored
schedule with the same endpoint and time preserves both cost and the next
observation law. Expanding each stored schedule recovers a primitive
controller with the same pathwise arrival time, so
$J_{\mathrm{mac}}^*(s)=J_{\mathrm{prim}}^*(s)$.
\end{proof}

Complete action and outcome generation additionally makes proper deterministic
Markov macro-policies and proper solution subgraphs interchangeable: at every
reachable state they have the same conditional expected cost, by induction on
the remaining horizon, since Eq.~\ref{eq:bellman-q} assigns the selected action
exactly the cost of the policy recursion. We refer to this as
\emph{policy correspondence}.

\label{subsec:bound_pruning}

\begin{proposition}[Admissible SIPP and action bounds]
\label{prop:admissible_bounds}
For every reachable state $s=(q,t,y)$,
$
    h^{\mathrm{opt}}(q,t)
    \leq J^*(s)
    \leq h^{\mathrm{rob}}(q,t)$.
If every successor satisfies
$\ell(s_a^\omega)\leq J^*(s_a^\omega)\leq u(s_a^\omega)$, then every
generated action satisfies
$
    \underline{Q}(s,a)
    \leq Q^*(s,a)
    \leq \overline{Q}(s,a)$.
\end{proposition}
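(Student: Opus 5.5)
The plan is to prove the three inequalities separately: first the optimistic lower bound, then the robust upper bound, then the action-level bracket. The state-level bounds are pathwise statements about primitive motion, so I will work in the primitive model and transfer to $J^*=J^*_{\mathrm{mac}}$ through Lemma~\ref{lem:macro_lossless}, which gives $J_{\mathrm{mac}}^*(s)=J_{\mathrm{prim}}^*(s)$.

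\emph{Lower bound.} If $J^*(s)=\infty$ there is nothing to show. Otherwise fix an optimal proper macro policy. By Lemma~\ref{lem:macro_lossless}, expanding its stored schedules gives a primitive policy with the same pathwise arrival times. Consider any complete status assignment consistent with $y$. The executed primitive path starts at the occupied pair $(q,t)$, and each later edge interval and destination-vertex interval it uses is certified safe under the information state current at that step. So each used interval has probability one, or is an uncertain interval that was observed safe. Every such interval is safe in the optimistic roadmap, which keeps all probability-one intervals and marks every uncertain interval safe. The path is therefore feasible there, and its arrival time $t_g$ satisfies $t_g\ge A_q^{\mathrm{opt}}(t)$, since any-start-time SIPP returns the earliest arrival from $(q,t)$. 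This holds on every branch, so taking expectations gives $J^*(s)=\mathbb{E}[t_g-t]\ge h^{\mathrm{opt}}(q,t)$. If $h^{\mathrm{opt}}=\infty$, no branch can reach the goal, so $J^*=\infty$ as well.

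\emph{Upper bound.} Suppose $h^{\mathrm{rob}}(q,t)<\infty$, and take the robust SIPP path from $(q,t)$. After the start pair it uses only probability-one intervals, so each of its primitive steps is certified safe under every $y$. Run it as an open-loop primitive policy that ignores observations (sensing still happens but does not change the choice). It is applicable at every reachable history and reaches $q_g$ by $A_q^{\mathrm{rob}}(t)\le T_{\max}$ on every branch. It is therefore proper with value $h^{\mathrm{rob}}(q,t)$, and Lemma~\ref{lem:macro_lossless} transfers this to a macro policy of the same value.

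The main obstacle is the transfer in this step: the path may pass internal vertices where new variables become visible or where $q_g$ is reached. I will handle it by partitioning the path at its observation events, as in the proof of Lemma~\ref{lem:macro_lossless}, so that each segment is an applicable macro-action satisfying conditions (i)–(iii). I also need that the same continuation remains applicable after any observation outcome; this holds because certification of probability-one intervals does not depend on $y$. Goal arrival in the middle of the path only truncates it and can only lower the cost. Hence $J^*(s)\le h^{\mathrm{rob}}(q,t)$.

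\emph{Action bounds.} I will substitute the hypothesis $\ell\le J^*\le u$ termwise into Eq.~\ref{eq:bellman-q}. Every weight $O(\omega\mid\bar{s}_a)$ is positive and $c(s,a)$ is finite, so the sums are monotone in extended-real arithmetic; if any successor term is infinite, the inequality still holds in the correct direction. Comparing with Eqs.~\ref{eq:action-lower}–\ref{eq:action-upper} then gives $\underline{Q}(s,a)\le Q^*(s,a)\le\overline{Q}(s,a)$.
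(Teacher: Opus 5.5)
Your proposal is correct and follows essentially the same route as the paper. The lower bound uses pathwise feasibility of every branch in the optimistic roadmap, the upper bound uses the robust route split at mandatory observation events as an observation-independent witness, and the action bounds come from termwise monotone substitution with nonnegative outcome weights. The cosmetic differences are that you fix an optimal policy and pass through the primitive model via Lemma~\ref{lem:macro_lossless}, whereas the paper takes the branches of an arbitrary proper policy directly and minimizes at the end, so it never needs the optimum to be attained.
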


\begin{proof}[Proof sketch]
Every branch of every proper policy is feasible in the optimistic roadmap, so
its earliest arrival lower-bounds that branch and hence $J^*(s)$, while a
robust route uses only certified probability-one intervals and is a feasible
witness. Multiplying successor bounds by nonnegative outcome
probabilities and adding $c(s,a)$ gives the action inequalities.
\end{proof}

\begin{proposition}[Sound pruning and OR backups]
\label{prop:sound_pruning}
Suppose the node and action bounds are valid, and every finite upper bound
is backed by a feasible witness policy. If
$\underline{Q}(s,a)>B(s)$, then $a$ is not optimal. After any sequence of
such removals, Eqs.~\ref{eq:or-lower} and~\ref{eq:or-upper} satisfy
$
    \ell(s)\leq J^*(s)\leq u(s)$.
\end{proposition}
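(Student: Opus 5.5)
I would prove the statement in two parts: first the pruning claim, then the bracketing of the backed-up OR values. The invariant assumed in the hypothesis is that every node and action bound is valid. That is, $\ell(s')\le J^*(s')\le u(s')$ holds at every successor, so by Proposition~\ref{prop:admissible_bounds} we have $\underline{Q}(s,a)\le Q^*(s,a)\le\overline{Q}(s,a)$ for each generated action. In addition, every finite $u$ is realized by a feasible policy.

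\textbf{Pruning.} Suppose $\underline{Q}(s,a)>B(s)=u(s)$. Then $B(s)$ is finite, and by hypothesis some feasible witness policy $\pi_B$ from $s$ has value at most $B(s)$. Policy correspondence makes its first action an element of $\mathcal{A}(s)$, so the Bellman equation~\eqref{eq:bellman-j} gives
\[
J^*(s)\le J^{\pi_B}(s)\le B(s)<\underline{Q}(s,a)\le Q^*(s,a).
\]
Hence $a$ is not optimal. Moreover, some action $a'$ with $Q^*(s,a')=J^*(s)<\underline{Q}(s,a)$ attains the minimum in~\eqref{eq:bellman-j}; the set $\mathcal{A}(s)$ is finite, so the minimum is attained. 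Such an $a'$ satisfies $\underline{Q}(s,a')\le Q^*(s,a')=J^*(s)\le B(s)$, so it is never pruned.

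\textbf{Stability under repeated removal.} The argument must survive a whole sequence of removals, during which $B(s)$ may decrease as bounds tighten. The property to maintain is that at least one optimal action stays in $\mathcal{A}_{\mathrm{act}}(s)$. This holds because every optimal $a'$ always satisfies $\underline{Q}(s,a')\le J^*(s)\le B(s)$, using only the validity of $u(s)$ at that moment. Therefore
\[
\min_{a\in\mathcal{A}_{\mathrm{act}}(s)}Q^*(s,a)=J^*(s).
\]
I expect this to be the main obstacle: the argument is mildly circular, since the validity of $u(s)$ is itself the conclusion of the backup. I would resolve it by induction over the backup sequence, or over reverse topological order in the acyclic graph, assuming validity at all successors and the previously stored $u(s)$ before each update.

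\textbf{OR backups.} For the lower bound~\eqref{eq:or-lower}, Proposition~\ref{prop:admissible_bounds} gives $h^{\mathrm{opt}}(q,t)\le J^*(s)$. Also
\[
\min_{a\in\mathcal{A}_{\mathrm{act}}}\underline{Q}(s,a)\le \underline{Q}(s,a')\le Q^*(s,a')=J^*(s)
\]
for a retained optimal $a'$, so the maximum of the two terms is at most $J^*(s)$. If $J^*(s)=\infty$ the inequality is trivial.

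For the upper bound~\eqref{eq:or-upper}, Proposition~\ref{prop:admissible_bounds} gives $J^*(s)\le h^{\mathrm{rob}}(q,t)$. For each active $a$,
\[
J^*(s)\le Q^*(s,a)\le\overline{Q}(s,a),
\]
so $J^*(s)$ is at most the minimum of the two terms. When the active set is empty, the empty minimum is infinite and both inequalities hold trivially.

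\textbf{Witness invariant.} Finally, the new finite $u(s)$ must again be witnessed. If the minimizing term is $h^{\mathrm{rob}}$, the robust route is the witness. If it is $\overline{Q}(s,a)$, the witness takes action $a$ and then, on each outcome, follows the witness of $u(s_a^\omega)$. Each of these is finite, since the outcome probabilities are positive and $\overline{Q}(s,a)$ is finite. By the Bellman recursion, this composed policy has value exactly $\overline{Q}(s,a)$. This closes the induction.
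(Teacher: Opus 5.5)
Your proposal is correct and follows essentially the same route as the paper. Both arguments get $J^*(s)\le B(s)$ from the witness property, use the chain $Q^*(s,a)\ge\underline{Q}(s,a)>B(s)\ge J^*(s)$ for a pruned action, show that an optimal action is always retained, and then bracket the backed-up minima. Your explicit witness composition and reverse-topological induction spell out what the paper compresses into its first sentence and its closing base-case remark.

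One small wording fix concerns the case $\mathcal{A}_{\mathrm{act}}(s)=\emptyset$. There $\ell(s)=\infty$, so $\ell(s)\le J^*(s)$ is not trivial on its own. It does follow from your retention argument: an empty active set forces $\mathcal{A}(s)=\emptyset$, and hence $J^*(s)=\infty$ at a non-goal state.
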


\begin{proof}[Proof sketch]
Every finite term in $B(s)$ is backed by a feasible policy, so
$J^*(s)\leq B(s)$. For a pruned action,
$
Q^*(s,a)\geq\underline{Q}(s,a)>B(s)\geq J^*(s)$,
and the action cannot be optimal. An optimal action therefore remains
active, and taking minima of valid active-action bounds preserves the OR-node
bounds.
\end{proof}

\subsection{Safety and Finite-Roadmap Optimality}
\label{subsec:optimality}

\begin{theorem}[Primitive-step safety]
\label{thm:safety}
Assume every interval with safety probability one is truly safe, observations report realized interval statuses exactly, and execution follows the planned timing.
Every execution that starts at $q_s$ and uses only applicable StochSIPP
actions is collision-free.
\end{theorem}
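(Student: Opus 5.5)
The plan is an induction over the sequence of primitive edges actually executed, with an invariant linking the planner's information assignment $y$ to the realized statuses. Fix a complete realization $x=(X_j)_j$ drawn from the model; it stays fixed for the episode. Let the execution be a finite concatenation of applicable macro-actions $a_1,a_2,\dots$ from states $s_0,s_1,\dots$. Each $s_i$ is reached by the initial sensing at $(q_s,0)$ or by sensing at a macro endpoint. The invariant is:
\begin{itemize}
\item[(I1)] every $j\in\operatorname{dom}(y_i)$ satisfies $y_i(j)=x_j$;
\item[(I2)] the robot physically occupies $(q_i,t_i)$ safely, that is, the vertex interval $(q_i,\kappa(q_i,t_i))$ has realized status $1$.
\end{itemize}
For the base case, $y_\emptyset$ satisfies (I1) vacuously. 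Initial sensing adds exact values by the exact-sensing assumption, so (I1) holds for $y_\emptyset\cup\omega_0$. (I2) holds because $p_{q_s}^{\kappa(q_s,0)}=1$, and probability-one intervals are truly safe by assumption.

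The core step is a lemma: if $j$ is certified safe under a $y$ satisfying (I1), then $x_j=1$. There are two cases. If $p_j=1$, this is the assumption on deterministic intervals. If $j\in\operatorname{dom}(y)$ with $y(j)=1$, then (I1) gives $x_j=y(j)=1$.

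Now take a macro-action $a$ from $s_i$ with schedule $\sigma=((q_0,t_0),e_0,\dots,(q_m,t_m))$. Applicability condition (i) says that each edge interval $(e_k,\kappa(e_k,t_k))$ and each destination interval $(q_{k+1},\kappa(q_{k+1},t_{k+1}))$ is certified safe under $y_i$. The lemma then gives realized status $1$ for all of them. By the semantics of the safety events, a safe edge interval at departure $t_k$ means the whole motion over $[t_k,t_k+W(e_k))$ is collision-free. The timing assumption guarantees that the robot departs exactly at $t_k$, so the relevant intervals are exactly the certified ones. A safe vertex interval means occupancy at $t_{k+1}$ is collision-free. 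Wait self-loops are covered by the same argument, since their edge event spans the full unit occupancy. Hence every primitive step of $\sigma$ is collision-free, and the endpoint occupancy satisfies (I2) for $s_{i+1}$.

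For (I1) at $s_{i+1}$: no sensing occurs at internal arrivals, because condition (ii) says $\mathcal{N}$ is empty there, so $y$ does not change inside the macro-action. At a non-goal endpoint, the new assignment is $y_i\cup\omega$, where $\omega$ is the exact realized status of $\mathcal{N}(q_m,t_m,y_i)$. So (I1) is preserved. At a goal endpoint, the execution terminates.

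The main obstacle is keeping the plan-time assignment $y$ used for certification tied to realized statuses. In particular, the intervals checked in (i) must be the ones actually used at execution time, which is exactly where the timing assumption enters. Consistency of $\kappa$ with actual times does the rest. The sensing assumption $\mathcal{L}(e,t)\cap\mathcal{U}\subseteq\mathcal{R}(q,t)$ is not needed for safety itself, because applicability already demands certification. It only ensures that certification is achievable, so I would remark on this rather than use it.

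Every execution consists of finitely many macro-actions, since each strictly advances time within $T_{\max}$. Induction over them therefore covers all primitive steps and the initial occupancy, which yields collision-freedom.
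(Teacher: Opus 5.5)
Your proposal is correct and follows essentially the same argument as the paper: the same two invariants (agreement of $y$ with realized statuses and safe current occupancy), the same base case from $p_{q_s}^{\kappa(q_s,0)}=1$, the fact that certified-safe intervals are realized safe, and induction over the executed primitive steps, with waits covered by the self-loop event. Your remark that the lookahead sensing assumption is not logically needed for safety is accurate, since the paper only mentions it in passing; it is a harmless refinement.
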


\begin{proof}
Maintain two invariants: $y$ agrees with every status observed in the
current episode, and the robot safely occupies its current vertex. They hold
initially because $y_\emptyset$ contains no claims and the start occupancy
has probability one. Exact sensing preserves the first invariant.

Every interval used by an applicable schedule is certified safe. A probability-one variable is safe by assumption, and an uncertain certified variable was observed with value one. Hence every used edge execution and destination occupancy is safe in the episode. The lookahead sensing condition exposes these variables before departure. For a wait, the self-loop event covers the waiting period and the destination event covers its endpoint. Induction over all primitive steps preserves safe occupancy and proves collision freedom.
\end{proof}

\begin{theorem}[Finite-roadmap optimality]
\label{thm:optimality}
Under the assumptions of Theorem~\ref{thm:safety}, additionally assume the
interval probabilities are exact and independent, action and outcome
generation are complete, and Algorithm~\ref{alg:stochsipp} uses the stated
caches, backups, solved tests, and strict pruning rule. Then the algorithm
terminates after finitely many expansions. If a proper primitive policy
exists on the roadmap, it returns a collision-free policy with minimum
expected goal-arrival time among all such policies. Otherwise, it returns
failure.
\end{theorem}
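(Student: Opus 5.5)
The proof has four parts: termination, soundness of the certificate, optimality, and the failure case. Throughout, I rely on Lemmas~\ref{lem:sufficiency} and~\ref{lem:macro_lossless}, policy correspondence, and Propositions~\ref{prop:admissible_bounds} and~\ref{prop:sound_pruning}. The backbone is an invariant maintained by every backup:
\[
\ell(v)\le J^*(v)\le u(v)\quad\text{at every cached node } v,
\]
where every finite $u(v)$ is the value of a feasible witness policy. Every solved node must also satisfy $\ell(v)=u(v)=J^*(v)$.

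\textbf{Termination.} The OR cache is keyed by $(q,t,y)$. Here $V$ and $\mathcal{T}$ are finite and $y$ ranges over partial assignments on the finite set $\mathcal{U}$, so there are at most $|V|\,|\mathcal{T}|\,3^M$ OR nodes. Each node is expanded at most once, since it is marked expanded. Each expansion creates finitely many AND nodes and outcomes, because $\mathcal{A}(s)$ contains one representative per endpoint pair $(q',t')$. Each fixed-point backup terminates because the graph is acyclic ($t_a>t$) and the backup is a single reverse-topological pass.

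It remains to show the loop cannot stall while $r$ is unsolved. Suppose the lower-bound solution graph has no unexpanded unsolved OR node. Then I argue by induction in reverse topological order (decreasing $t$) that every node on it is solved. Leaves are goal nodes or nodes with $h^{\mathrm{opt}}=\infty$. At an expanded node whose selected children are all solved, the selected AND node has $\underline{Q}=\overline{Q}$ and minimizes $\underline{Q}$ over active actions. Hence Eq.~\ref{eq:or-solved} holds and the node is marked solved. Propagating upward, $r$ becomes solved. This contrapositive is the step I expect to be the main obstacle. The care needed is in handling pruned actions, ties broken by the fixed total order, and infinite values in extended arithmetic, where $\infty=\infty$ must count as solved.

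\textbf{Invariant and soundness of the certificate.} The invariant holds at creation by Proposition~\ref{prop:admissible_bounds} and for goal nodes trivially. It is preserved through backups and pruning by Proposition~\ref{prop:admissible_bounds} (action bounds) and Proposition~\ref{prop:sound_pruning} (OR backups). The witness property is preserved as follows: a finite $\overline{Q}(s,a)$ combines witnesses of the children, and $h^{\mathrm{rob}}$ is itself a witness.

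For a solved OR node with certifying action $a^\star$, three facts combine:
\[
Q^*(s,a^\star)=\underline{Q}(s,a^\star)=\overline{Q}(s,a^\star)\le\underline{Q}(s,a)\le Q^*(s,a)
\]
for every other active action $a$; pruned actions are non-optimal by Proposition~\ref{prop:sound_pruning}; and Eq.~\eqref{eq:bellman-j} then gives $J^*(s)=Q^*(s,a^\star)$. By induction over the solved subgraph, which is acyclic and finite, the extracted certifying-action policy attains value $J^*$ at every reachable OR node. Summing over the chance root gives value $J_0^*$. Policy correspondence confirms that the extracted subgraph is a macro policy with this expected cost.

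\textbf{Optimality, safety, and failure.} Expanding each certifying action's stored schedule gives a primitive policy with the same pathwise arrival times, by the converse direction of Lemma~\ref{lem:macro_lossless}. Together with $J_{\mathrm{mac}}^*=J_{\mathrm{prim}}^*$ from the same lemma and the Markov sufficiency of Lemma~\ref{lem:sufficiency}, this value is minimal over all proper primitive policies. The returned policy uses only applicable actions, so it is collision-free by Theorem~\ref{thm:safety}.

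Conversely, suppose no proper primitive policy exists. Then $J_0^*=\infty$ by losslessness, and any finite $u(r)$ would exhibit a feasible witness, which is impossible. So when $r$ is solved, $\ell(r)=u(r)=\infty$ and the algorithm reports failure. Likewise, if a proper policy exists, then $J_0^*<\infty$ forces $\ell(r)<\infty$, so the algorithm does not report failure.
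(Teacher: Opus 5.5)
Your proposal is correct and follows essentially the paper's route. It uses the $|V|(T_{\max}+1)3^M$ state bound and shows an expansion is available while the root is unsolved: you argue by contrapositive reverse-topological induction, where the paper descends along selected actions and unsolved children, which is the same argument. It gets valid bounds from the two propositions and exactness of the certifying action from $\underline{Q}=\overline{Q}\le$ every alternative lower bound, and transfers the result via sufficiency, losslessness, policy correspondence, and the safety theorem. You flag but do not discharge one point: $\mathcal{A}_{\mathrm{act}}(s)$ must stay nonempty at every expanded node with $\mathcal{A}(s)\neq\emptyset$, so that your induction always has a selected action. The paper states this explicitly: a finite nonempty action set has an optimal action, and strict sound pruning never removes it.
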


\begin{proof}[Proof sketch]
There are at most $|V|(T_{\max}+1)3^M$ OR-state keys and finitely many endpoint
pairs per state, and every action advances time, so the cached graph is a
DAG. While its root is unsolved, the selection rule reaches an unexpanded OR
node, and strict sound pruning retains an optimal action. Valid
bounds and exact solved-node backups then establish the exact root value by
reverse induction, and Lemma~\ref{lem:macro_lossless} with policy correspondence
transfers that solution to a primitive policy, whose execution is safe by
Theorem~\ref{thm:safety}.
\end{proof}

\paragraph{Scope of the guarantee.}
The result is exact for the supplied finite roadmap, horizon, independent
interval variables, and sensing model, and does not imply global optimality in
the continuous configuration space. Correlated statuses, incorrectly merged time
ranges, sensing errors, or timing deviations violate the model assumptions.
Incorrect probabilities invalidate expected-cost optimality but not safety,
provided every probability-one interval is truly safe and the sensing and timing assumptions hold.

%%%%%%%%%%%%%%%%%
\section{Experiments}
\label{sec:exp}

We ask three questions. Does conditioning a route choice on an observed gate
beat committing to a fixed path? How much search do the bounds and the
ordering estimate save? And how does explicit joint-outcome branching scale
with the number of statuses that are ambiguous at once? Gazebo warehouse experiments additionally provide a quantitative evaluation of policy execution in a more realistic simulated environment. Per-seed controlled-roadmap results and the full quantitative Gazebo comparison are in the supplement.

\subsection{Setup and Scope of the Empirical Claims}

The controlled study uses fifty warehouse-like, 50-vertex roadmaps in an
$800\times800$ workspace with five static and three moving obstacles. Each
moving obstacle independently follows one of two candidate trajectories with
equal prior probability, giving eight trajectory-level worlds per map and
induced edge--time marginals. Every map pairs a short route containing one permanently ambiguous gate ($p=0.5$) with a long route that is always safe, and
the gate is observed before the route choice. StochSIPP computes one
solution subgraph per map and replays it over 1,000 sampled worlds.

Seven open-loop baselines each commit to one path before any trial and never
revise it. Five discount edge cost by safety probability and admit intervals by
a threshold that \emph{det-A*}, \emph{stoch-det-A*}, \emph{0.5-A*},
\emph{0.7-A*}, and \emph{1.0-A*} set at every edge, nonzero probability,
$0.5$, $0.7$, and certainty, respectively. \emph{Most-likely} plans against the
most probable joint obstacle outcome, and \emph{reactive} commits to the
shortest route but halts once it observes a blocked edge, so it never
collides but does not always arrive. A trial succeeds when the robot reaches the
goal without entering the $\texttt{ROB\_RAD}+\texttt{OBST\_RAD}=1.0$ clearance
used to construct the safe intervals; for \emph{reactive}, whose failures are
halts rather than collisions, success is the goal-reaching rate.
Figure~1 illustrates the Gazebo warehouse experiment, which is evaluated over 1,000 trials in two worlds using a $1.2$~m collision radius.
\todo{Specify the probability-weighted A* edge-cost formula, the Full-LRTA
refinement rule, the probability-aware ordering score, hardware and software,
timing repetitions, the wall-clock timeout, and every random seed. Document how
sampled trajectories produce the interval partition, episode-fixed statuses,
visibility sets, and observations used at execution.}

Three limits bound what these numbers establish. Trajectory sampling generally
correlates statuses across edges and times, so Theorem~\ref{thm:optimality} does
not apply to the sampled distribution; the evaluations therefore measure the practical value of contingent route selection when the supplied edge--time marginals arise from correlated trajectory-level worlds, rather than claiming expected-cost optimality under that evaluation distribution. Zero observed collisions is consistent
with, but does not verify, the conditions of Theorem~\ref{thm:safety}. Recorded
arrival includes collided runs, so an unsafe baseline's value is not a
proper-policy expected arrival time.

\subsection{Contingency Against Fixed Paths}

\begin{table}[t]
\centering
\small
\setlength{\tabcolsep}{3pt}
\begin{tabular}{lccccc}
\toprule
& \multicolumn{3}{c}{50 risky maps} & \multicolumn{2}{c}{Gated} \\
\cmidrule(lr){2-4}\cmidrule(lr){5-6}
Method & Arr. & All-Pass & Succ.\% & 2-rt\% & 3-rt\% \\
\midrule
StochSIPP & 11.6 & 7.9 & $100.0$ & $100.0$ & $100.0$ \\
1.0-A*, 0.7-A* & 14.1 & 14.1 & $100.0$ & $0.0$ & $0.0$ \\
most-likely & 14.1 & 14.1 & $100.0$ & $50.3$ & $66.4$ \\
det-A*, stoch-det-A*, & \multirow{2}{*}{6.9} & \multirow{2}{*}{6.9}
  & \multirow{2}{*}{$49.7$} & \multirow{2}{*}{$49.7$}
  & \multirow{2}{*}{$65.7$} \\
\quad 0.5-A*, reactive & & & & & \\
\bottomrule
\end{tabular}
\caption{Mean arrival and success (\%), 1,000 trials per scenario; grouped
methods recorded identical values. \emph{All-Pass} restricts arrival to worlds
where every method succeeds, which selects for worlds in which the gate is open. On the gated
scenarios every succeeding baseline arrives at $6.0$ and StochSIPP at
$7.0$; $0.0$ means no plan was returned.}
\label{tab:results}
\end{table}

On every map the baselines collapse into two clusters that each commit to an
identical route regardless of the admission rule that produced it: a cautious
cluster (\emph{1.0-A*}, \emph{0.7-A*}, \emph{most-likely}) succeeding in every
trial and a risky cluster (\emph{det-A*}, \emph{stoch-det-A*}, \emph{0.5-A*},
\emph{reactive}) succeeding in $49.7\%$ (Table~\ref{tab:results}). We summarize the per-map paired differences as $\Delta=\mathrm{StochSIPP}-\mathrm{baseline}$, reported as a mean with a 95\% interval. Against the cautious cluster $\Delta$success is exactly
$0$ with zero width while $\Delta$arrival is $-2.6\pm0.4$: StochSIPP
matches their perfect success at $7\%$--$31\%$ lower mean arrival ($17\%$ on
average), taking the short route whenever the gate is reported clear. Against
the risky cluster $\Delta$success is a constant $+50.3$ points, again with zero
width, and $\Delta$arrival is $+4.6\pm0.4$: their shorter times are bought by
gambling on the same gate, half the time by colliding or, for \emph{reactive},
by halting short of the goal. Planning costs $0.033\pm0.004$~s more than a
single-shot path lookup.

The gated scenarios sharpen the comparison. With one (\emph{2-route}) and two
(\emph{3-route}) ambiguous gates between start and goal, no route clears the
$0.7$ or $1.0$ threshold, so \emph{0.7-A*} and \emph{1.0-A*} return no plan at
all rather than a risky one, while StochSIPP reaches the goal in every
trial by disambiguating each gate in turn. \emph{Most-likely} no longer belongs to the cautious cluster in these scenarios because selecting each obstacle's more probable hypothesis independently does not guarantee a jointly safe route when gates are involved.

\subsection{Combined Search-Aid Ablation}

The \emph{full} configuration combines the admissible bounds, the probability-aware expected-value ordering estimate, and cached arrival-time functions. \emph{No-heur} removes these mechanisms, while \emph{full-lrta} augments \emph{full} with an LRTA-style learning mechanism that refines cached estimates along explored branches using backed-up search values \cite{bulitko2006learning}. Further implementation details are provided in the supplementary material.

\begin{table}[t]
\centering
\small
\setlength{\tabcolsep}{3pt}
\begin{tabular}{lccc}
\toprule
\multicolumn{4}{c}{(a) Ablation: $\Delta=\text{\emph{full}}-\text{variant}$} \\
Variant & $\Delta$Expands & $\Delta$Nodes & $\Delta$Solve (s) \\
\midrule
no-heur & $-21210\pm18712$ & $-62320\pm47816$ & $-4.15\pm2.71$ \\
full-lrta & $0\pm0$ & $0\pm0$ & $-1.12\pm0.44$ \\
\bottomrule
\end{tabular}
\begin{tabular}{rrrrr}
\multicolumn{5}{c}{} \\
\toprule
\multicolumn{5}{c}{(b) Simultaneous ambiguity} \\
Statuses & Build (s) & Solve (s) & Expands & Nodes \\
\midrule
2 & 0.003 & 0.002 & 29 & 116 \\
3 & 0.005 & 0.016 & 175 & 796 \\
4 & 0.006 & 0.179 & 1,401 & 7,140 \\
5 & 0.010 & 2.459 & 14,011 & 79,084 \\
6 & 0.012 & 39.639 & 168,133 & 1,041,172 \\
\bottomrule
\end{tabular}
\caption{Search effort. (a) Paired differences over seven scenarios (five random
seeds plus the two gated ones), mean $\pm$ half-width of the $95\%$ interval;
negative favors \emph{full}. (b) Growth with simultaneously ambiguous statuses.}
\label{tab:effort}
\end{table}

All three configurations return the same root value on every scenario
($J_0^*=7.0$ on seeds 3 and 7, $J_0^*=6.0$ on seeds 8, 10, and 22, and
identically on both gated scenarios), so the search aids change only how much of
the graph is searched. Every column of Table~\ref{tab:effort}(a) favors
\emph{full} with an interval clear of zero. The intervals are wide because
effort spans three orders of magnitude across scenarios: \emph{full} expands
3--7 OR nodes on the five random maps but 3,837 on \emph{2-route} and 13,585 on
\emph{3-route}, and its advantage over \emph{no-heur} falls from three or four
orders of magnitude on the random maps to $16\times$ and under $4\times$ on the
gated ones. Because several mechanisms change together, this supports their
combined benefit rather than an isolated claim about the bounds.
\emph{Full-lrta} expands exactly the same nodes as \emph{full} everywhere and is
only slower, so its refinement yields no search reduction at these sizes.

\subsection{Scalability with Simultaneous Ambiguity}

The stress test isolates the number of statuses ambiguous at one decision state.
A safe edge joins start and goal, and $k$ independent moving obstacles
each add one permanently ambiguous dead-end edge from the start, so the roadmap
has $k+2$ vertices while the solver branches on $k$ statuses at once.

Table~\ref{tab:effort}(b) shows solve time growing far faster than the roadmap, with increasing multipliers ($8\times$, $11\times$, $14\times$, $16\times$); expansions and nodes grow similarly. This stress test varies simultaneous ambiguity at one decision state, not total uncertainty across the roadmap, so the growth reflects joint-outcome branching rather than roadmap size. Build time never exceeds $0.012$~s, and the fixed-path baselines plan in under a millisecond. Settings beyond six statuses did not finish within the timeout. Five points show rapid, compounding growth but cannot establish an asymptotic rate.
\todo{Record which settings beyond six statuses were attempted and the
wall-clock timeout used.}

%%%%%%%%%%%%%%%%%%%%%%%%%%%%%%%%
%%%%%%%%%%%%%%%%%
\section{Conclusion}
\label{sec:conc}

StochSIPP searches a finite action--observation graph over independent Bernoulli interval statuses. Under the assumptions of Theorem~\ref{thm:safety}, its resulting policy is provably collision-free, and it is expected-cost optimal when the interval probabilities are correct and independent. On fifty single-gate maps, it matched the safe baselines' $100\%$ observed success at $7\%$--$31\%$ lower mean arrival and reached the goal in scenarios where conservative baselines returned no plan. Extending StochSIPP to correlated uncertainty and developing approximate methods for large joint observation spaces are important directions for future work.

% \section*{Acknowledgments}

\bibliography{aaai2027}

\section{Purpose and Scope}

This supplement provides the complete proofs and the disaggregated
experimental results referenced by the main paper. The proofs concern the
finite-roadmap model with mutually independent, episode-fixed Bernoulli
interval variables, exact sensing, and planned timing. The experimental
section separately identifies which reported study satisfies that
factorization assumption.

\section{Proof Setup}

This section restates the notation and search equations needed to read the
proofs independently of the main paper. Let $\mathcal{U}=\{1,\ldots,M\}$ be
the set of uncertain edge- and vertex-interval variables. Variable
$X_j\in\{0,1\}$ has supplied safety probability
$p_j=\Pr[X_j=1]$, and the variables are mutually independent. A decision
state is $s=(q,t,y)$, where $q$ is a roadmap vertex, $t$ is an integer time,
and the partial map $y$ records the observed variables and their statuses.
The set $\operatorname{dom}(y)$ contains the variables recorded by $y$.
The finite planning horizon is $T_{\max}$, and $q_s$ and $q_g$ are the start
and goal vertices.

An applicable macro-action $a=(q_a,t_a)\in\mathcal{A}(s)$ stores a
certified-safe schedule $\sigma(s,a)$, incurs
$c(s,a)=t_a-t$, and reaches the pre-observation state
$\bar{s}_a=(q_a,t_a,y)$. The complete generator includes every applicable
endpoint-time pair. Let $\Omega^+(\bar{s}_a)$ contain every
positive-probability assignment $\omega$ to the variables newly visible at
the endpoint. The probability of an assignment is
$O(\omega\mid\bar{s}_a)$, and its successor is
$s_a^\omega=(q_a,t_a,y\cup\omega)$. A proper policy reaches $q_g$ within
the horizon for every positive-probability outcome sequence. Its optimal
remaining expected arrival time is $J^*(s)$, and $Q^*(s,a)$ is the optimal
cost conditioned on selecting $a$ first. The value $J_0^*$ is the
probability-weighted value after initial sensing. The Bellman equations are
\begin{align}
    Q^*(s,a)
    &=
    c(s,a)
    +
    \sum_{\omega\in\Omega^+(\bar{s}_a)}
      O(\omega\mid\bar{s}_a)J^*(s_a^\omega),
    \label{eq:bellman-q}\\
    J^*(s)
    &=
    \begin{cases}
        0, & q=q_g,\\
        \min_{a\in\mathcal{A}(s)}Q^*(s,a),
            & q\neq q_g,\ \mathcal{A}(s)\neq\emptyset,\\
        \infty, & q\neq q_g,\ \mathcal{A}(s)=\emptyset.
    \end{cases}
    \label{eq:bellman-j}
\end{align}

The optimistic SIPP relaxation treats every uncertain interval as safe,
whereas the robust relaxation treats every uncertain interval as blocked
after the certified current occupancy. Their remaining-time values are
$h^{\mathrm{opt}}(q,t)$ and $h^{\mathrm{rob}}(q,t)$, respectively. Each OR
node stores lower and upper bounds $\ell(s)$ and $u(s)$. The corresponding
action bounds are
\begin{align}
    \underline{Q}(s,a)
    &=
    c(s,a)+
    \sum_{\omega\in\Omega^+(\bar{s}_a)}
      O(\omega\mid\bar{s}_a)\ell(s_a^\omega),
    \label{eq:action-lower}\\
    \overline{Q}(s,a)
    &=
    c(s,a)+
    \sum_{\omega\in\Omega^+(\bar{s}_a)}
      O(\omega\mid\bar{s}_a)u(s_a^\omega).
    \label{eq:action-upper}
\end{align}
Let $\mathcal{A}_{\mathrm{act}}(s)\subseteq\mathcal{A}(s)$ contain the
actions not pruned at $s$. The OR backups are
\begin{align}
    \ell(s)
    &=
    \max\!\left\{
      h^{\mathrm{opt}}(q,t),
      \min_{a\in\mathcal{A}_{\mathrm{act}}(s)}
        \underline{Q}(s,a)
    \right\},
    \label{eq:or-lower}\\
    u(s)
    &=
    \min\!\left\{
      h^{\mathrm{rob}}(q,t),
      \min_{a\in\mathcal{A}_{\mathrm{act}}(s)}
        \overline{Q}(s,a)
    \right\}.
    \label{eq:or-upper}
\end{align}
The feasible-policy witness is
\begin{equation}
    B(s)
    =
    \min\!\left\{
      h^{\mathrm{rob}}(q,t),
      \min_{a\in\mathcal{A}_{\mathrm{act}}(s)}
        \overline{Q}(s,a)
    \right\}.
    \label{eq:witness}
\end{equation}
An action is pruned only if
\begin{equation}
    \underline{Q}(s,a)>B(s).
    \label{eq:prune}
\end{equation}
An expanded OR node is solved when it has a solved active action $a^\star$
such that
\begin{equation}
    \underline{Q}(s,a^\star)
    =
    \overline{Q}(s,a^\star)
    \leq
    \min_{\substack{a\in\mathcal{A}_{\mathrm{act}}(s)\\a\neq a^\star}}
      \underline{Q}(s,a).
    \label{eq:or-solved}
\end{equation}
OR nodes are cached by $(q,t,y)$, AND nodes by $(s,a)$, and all parent
links are retained. Every action strictly advances time.

\begin{algorithm}[tb]
\caption{Bounded Exact Search for StochSIPP}
\label{alg:stochsipp}
{\small
\begin{algorithmic}[1]
\REQUIRE Finite model defined above, complete macro-action generator, and
$h^{\mathrm{opt}},h^{\mathrm{rob}}$
\ENSURE An optimal contingent policy or failure
\STATE Initialize OR cache $C_O$ by $(q,t,y)$ and AND cache $C_A$ by $(s,a)$
\STATE Create chance root $r$, its initial children, arcs, and parent links
\STATE Back up $r$
\WHILE{$r$ is not solved}
    \STATE $\mathcal{F}\gets$ unexpanded, unsolved OR nodes on the
    lower-bound solution graph
    \STATE Select $v\in\mathcal{F}$ by a fixed deterministic priority
    \STATE Let $s_v$ be the state stored at $v$; mark $v$ expanded
    \STATE Generate complete $\mathcal{A}(s_v)$ and schedules $\sigma(s_v,a)$
    \IF{$\mathcal{A}(s_v)=\emptyset$}
        \STATE Set $\ell(s_v)=u(s_v)=\infty$ and mark $v$ solved
    \ELSE
        \STATE Set $\mathcal{A}_{\mathrm{act}}(s_v)\gets\mathcal{A}(s_v)$
        \FOR{each $a\in\mathcal{A}(s_v)$}
            \STATE Retrieve or create $(s_v,a)$ in $C_A$; store its
            schedule, cost, arc from $v$, and parent link
            \FOR{each $\omega\in\Omega^+(\bar{s}_a)$}
                \STATE Retrieve or create $s_a^\omega$ in $C_O$; store its
                arc probability $O(\omega\mid\bar{s}_a)$ and parent link
            \ENDFOR
        \ENDFOR
    \ENDIF
    \STATE Back up $v$ and all ancestors to a fixed point using
    Eqs.~\ref{eq:action-lower}--\ref{eq:or-solved}
\ENDWHILE
\IF{$\ell(r)=u(r)=\infty$}
    \STATE Report failure
\ELSE
    \STATE Return $s\mapsto(a^\star,\sigma(s,a^\star))$ at every
    nonterminal OR node reached by following certifying actions and all
    outcome arcs from $r$
\ENDIF
\end{algorithmic}
}
\end{algorithm}

\section{Complete Proofs}

This section gives the complete proofs of every theorem, lemma, and
proposition stated in the main paper. The statements are restated verbatim and
carry the same numbers as in the main paper.

\subsection{Computational Hardness}

\begin{theorem}[PSPACE-hardness]
\label{thm:hardness}
When durations, $T_{\max}$, interval endpoints, rational probabilities, and
a rational threshold $C$ use binary encoding, deciding whether
$J_0^*\leq C$ is PSPACE-hard. This holds with deterministic vertex occupancies
and one time-invariant independent Bernoulli variable per uncertain edge.
\end{theorem}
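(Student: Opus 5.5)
We reduce from stochastic CTP with directed edges and independent edge statuses, whose optimal expected-cost threshold problem is PSPACE-complete \cite{fried2013complexity}. A CTP instance consists of a directed graph $(V,E)$, rational edge costs, a subset of $m$ uncertain edges with independent rational blocking probabilities, start and goal vertices, and incident-edge sensing. We build a StochSIPP instance on the same roadmap with these features:
\begin{itemize}
\item every vertex variable is deterministically safe;
\item each uncertain CTP edge $e$ gets a single interval $I_e^1=\mathcal{D}_e$ with $p_e^1$ equal to its traversability probability;
\item deterministic edges get probability one or zero;
\item $\mathcal{R}(q,t)$ contains the variables of the out-edges of $q$ at every $t$.
\end{itemize}
This visibility set satisfies the lookahead assumption $\mathcal{L}(e,t)\cap\mathcal{U}\subseteq\mathcal{R}(q,t)$. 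Because vertices are safe, the wait self-loops are certified and harmless. Since statuses are time-invariant, waiting never reveals new information and only adds cost.

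The first key step is converting rational costs to positive integer durations while preserving the threshold comparison. Let $D$ be the common denominator of the costs, and set $W(e)=K\cdot D\cdot c(e)$ for a large integer $K$. In the original instance this gives exact scaling. If any cost is zero, use $W(e)=K D c(e)+1$ instead; the additive perturbation per traversal is then at most one unit. An optimal CTP policy may be taken cycle-free in the belief sense: it never revisits a (vertex, information) pair. There are at most $n3^m$ such pairs, so any such policy makes at most $H=n3^m$ traversals on any branch.

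Expected costs of policies are rationals whose denominators divide a product of the probability denominators raised to at most $m$. The minimum positive gap between distinct policy values, or between a policy value and $C$, is therefore at least $1/L$, where $L$ has polynomially many bits. Choosing $K > H\cdot L$ makes the perturbation $H/K$ smaller than this gap after rescaling. The StochSIPP instance then asks whether $J_0^*\le KDC+\delta$ for a suitable rational $\delta$ that separates the cases. Since $K$, $H$ and $L$ have polynomial bit length, binary encoding keeps the instance polynomial.

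The second step sets $T_{\max}$ to $H\cdot\max_e W(e)$ plus slack. This bound is exponential in value but polynomial in binary length, and every interval is stored by its two endpoints. With this horizon every cycle-free CTP policy fits, so CTP policies and proper StochSIPP policies correspond:
\begin{itemize}
\item CTP to StochSIPP: a CTP policy maps to a primitive policy with the same branch structure and the scaled costs.
\item StochSIPP to CTP: a proper primitive policy, made Markov by Lemma~\ref{lem:sufficiency}, projects to a CTP policy by deleting waits. This never increases cost.
\end{itemize}
Both directions preserve observation probabilities, since visibility and independence match. Lemma~\ref{lem:macro_lossless} lets us speak of $J_0^*$ via primitive policies.

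The main obstacle is the gap argument. We must bound the bit size of the separation between the optimal value and $C$ so that the added $+1$ perturbations, or horizon truncation effects, cannot flip the decision. The first thing to try is to avoid perturbation altogether by assuming, or reducing to, CTP instances with strictly positive costs. This holds in the Fried et al.\ construction, so pure scaling by $D$ suffices and only the horizon needs care. The explicit gap bound above is the fallback if that assumption fails.
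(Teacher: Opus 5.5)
Your proposal follows the paper's proof essentially step for step. The shared ingredients are:
\begin{itemize}
\item a reduction from Fried et al.'s directed, independent stochastic CTP;
\item one time-invariant interval per uncertain edge, with deterministic vertices;
\item the cycle-free information-state bound $H=n3^m$;
\item a denominator gap that lets an additive $+1$ per traversal be absorbed by a large multiplier $K$;
\item $T_{\max}=H\max_e W(e)$, with waits and $(q,y)$-cycles removed.
\end{itemize}

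The paper simply takes your fallback unconditionally. After scaling costs and $C$ to integers, it sets $W(e)=Kw(e)+1$ with $K=HD+1$ and threshold $C'=KC+H$. Here $D$ is the product of the probability denominators. Because each variable appears at most once per branch, $D$ already divides every policy's expected-cost denominator, so no powers are needed. The paper therefore never relies on your unchecked claim that the Fried et al.\ instances have strictly positive costs.

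One detail you should align is visibility. Expose at $q$ exactly the uncertain edges that the source CTP reveals there; the paper exposes all incident edges. Out-edges alone satisfy the lookahead assumption. However, if the directed CTP also reveals in-edges, your StochSIPP agent sees strictly less than the CTP agent, and the claimed policy correspondence fails.
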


\begin{proof}
Fried et al.~\cite{fried2013complexity} prove that stochastic CTP is
PSPACE-complete and that it remains so with directed edges and no dependencies
between edge statuses (their Theorem~2 and Corollary~2); for the same
construction, determining the expected cost of the optimal policy is
PSPACE-hard (their Corollary~1), which gives the expected-cost decision
version. Their construction uses only polynomially many bits for all weights
and probabilities, so all rational numbers in the decision instance use binary
encoding. A negative threshold is a trivial no-instance because
all costs are nonnegative, so assume $C\geq0$. We first convert the rational
edge costs to positive integer durations without changing the decision
answer.
Scale the edge costs and threshold so that every edge cost $w(e)$ and the
threshold $C$ are nonnegative integers. Let $m$ be the number of uncertain
edges, let $n$ be the number of vertices, and let $D$ be the product of the
denominators of their rational blocking probabilities. For any proper
deterministic policy, its expected cost has denominator dividing $D$.
Hence, if that cost is greater than $C$, it is at least $C+1/D$.

A CTP information state consists of a vertex and one of three labels for
each uncertain edge, so there are at most
$H=n3^m$ such states. Any cycle returning to the same information state
reveals no new status and can be removed without increasing cost. Thus, a
proper deterministic policy can be chosen cycle-free, with at most $H$ edge
traversals per branch. Set $K=HD+1$, replace every edge cost by the positive
integer
\[
    W(e)=K w(e)+1,
\]
and replace the decision threshold by $C'=KC+H$. For a proper deterministic
cycle-free policy $\pi$ and the sampled edge-status assignment, let
$C_\pi$ and $N_\pi$ be the resulting original path cost and traversal
count, respectively. These are random variables, and the transformed
expected cost is
\[
    K\,\mathbb{E}[C_\pi]+\mathbb{E}[N_\pi].
\]
If $\mathbb{E}[C_\pi]\leq C$, this value is at most $KC+H=C'$. If
$\mathbb{E}[C_\pi]>C$, it is greater than or equal to
$K(C+1/D)>KC+H=C'$. The transformation therefore preserves the answer.
The binary encodings of $H$, $D$, and $K$ have polynomial length.

Construct the StochSIPP instance on the transformed CTP graph. Assign every
vertex and wait self-loop probability one. Assign each uncertain CTP edge
one interval spanning all valid departure times, with its CTP safety
probability. Use time-independent visibility that exposes exactly the
uncertain CTP edges incident to the current vertex. Known edges receive
deterministic statuses. Set
\[
    T_{\max}=H\max_{e\in E}W(e),
\]
which also has polynomial binary encoding length. Every constructed interval
is stored by its two binary-encoded endpoints, so the full instance
description has polynomial length. A wait reveals no new status and can be
removed because all statuses and visibility sets are time-invariant and
waiting has positive cost. Likewise, a cycle returning to the same $(q,y)$
reveals nothing and can be removed. The remaining cycle-free StochSIPP
policies correspond exactly to CTP policies with the same observations and
transformed costs. Therefore, $J_0^*\leq C'$ exactly when the original CTP
optimum is at most $C$.
\end{proof}

\subsection{State and Policy Correspondence}

\begin{lemma}[Information-state sufficiency]
\label{lem:sufficiency}
Two histories ending in the same decision state $(q,t,y)$ have the same
applicable actions, action costs, and future observation probabilities. If a
proper primitive policy exists, an optimal deterministic Markov primitive
policy exists on these states.
\end{lemma}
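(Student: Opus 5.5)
The proof has two parts. The first is a conditional-independence computation showing that everything relevant to the future depends only on the terminal state $(q,t,y)$. The second is a finite-horizon backward induction producing a deterministic Markov policy.

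For the first part, fix two histories $h_1,h_2$ that both end in $s=(q,t,y)$. Each history $h$ determines a sequence of realized observations, and the event ``$h$ occurred'' is an intersection of events of the form $\{X_j=y(j)\}$ for $j\in\operatorname{dom}(y)$. The motions themselves carry no information, because the transitions are deterministic given certified statuses. By mutual independence of the $X_j$, conditioning on this event leaves the unobserved variables $\{X_j:j\notin\operatorname{dom}(y)\}$ independent Bernoulli$(p_j)$, exactly as under the prior. The conditional law of the unobserved statuses is therefore the same after $h_1$ and after $h_2$.

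Next I would check each ingredient in turn. Primitive applicability at $(q,t,y)$ depends only on $t+W(e)\le T_{\max}$ and on certification of $(e,\kappa(e,t))$ and $(r,\kappa(r,t+W(e)))$ under $y$, so it is a function of $(q,t,y)$. The same holds for macro-action applicability, since conditions (i)--(iii) refer only to $q,t,y$ and the fixed sets $\mathcal{N}$. Costs are $W(e)$ or $t'-t$. The newly observed set $\mathcal{N}(q',t',y)$ at the successor is determined by the state, and the observation law $O(\omega\mid q',t',y)$ is the product of the unchanged marginals. Hence the two histories induce identical controlled transition kernels from $s$ onward.

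For the existence claim, I would observe that the reachable primitive state space is finite: there are at most $|V|(T_{\max}+1)3^M$ triples. Every primitive edge, including waits, has $W(e)\ge1$, so time strictly increases and the transition graph on states is acyclic. I would define $V(q,t,y)$ by backward induction on $T_{\max}-t$. Goal states get value $0$. A post-observation non-goal state gets the minimum over applicable edges of $W(e)+\sum_\omega O(\omega\mid\cdot)V(\text{successor})$, using extended reals, with $\infty$ if no edge is applicable. The minimizer is chosen by a fixed tie-break, which gives a deterministic Markov policy $\pi^*$.

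I would then show by the same induction that $V(s)\le J^\pi(s)$ for every (possibly history-dependent) proper policy $\pi$ continuing from a history ending in $s$. This uses the kernel equality from the first part: conditioned on the history, the continuation of $\pi$ is a policy in the same finite-horizon problem, and its first step costs at least the Bellman minimum. Conversely, $J^{\pi^*}=V$ by construction. If some proper policy exists, then $V$ at the root, after initial sensing, is finite, so $\pi^*$ is proper: a finite value forces every positive-probability branch to reach $q_g$ by $T_{\max}$, since all observation probabilities are positive. Therefore $\pi^*$ is optimal.

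The main obstacle is the conditioning step. One must argue carefully that the history event factorizes as a conjunction on observed coordinates only. A history could appear to carry extra information, such as ``the robot survived''. I would dispose of this by noting that applicability is certified at plan time, so survival is implied by the observed values. The revealed event is therefore exactly $\{X_{\operatorname{dom}(y)}=y\}$.
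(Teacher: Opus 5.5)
Your proposal is correct and follows essentially the same route as the paper: independence together with the history event being determined by the variables in $\operatorname{dom}(y)$ leaves every unobserved variable at its prior $p_j$; applicability, cost, and visibility are functions of $(q,t,y)$; and backward induction over the finite, strictly time-advancing state space then yields a deterministic Markov minimizer. Your explicit induction $V(s)\le J^\pi(s)$ against history-dependent policies, and your argument that finite value forces $\pi^*$ to be proper, spell out steps that the paper compresses into its single backward-induction sentence.
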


\begin{proof}
Let $A\subseteq\mathcal{U}\setminus\operatorname{dom}(y)$ be any set of
unobserved variables, and let $x_A:A\rightarrow\{0,1\}$ be an assignment.
Mutual independence gives
\[
\begin{aligned}
 &\Pr\!\left[
   X_j=x_A(j)\ \forall j\in A
   \,\middle|\,
   X_j=y(j)\ \forall j\in\operatorname{dom}(y)
 \right]\\
 &\qquad =
 \prod_{j\in A}p_j^{x_A(j)}(1-p_j)^{1-x_A(j)}.
\end{aligned}
\]
For a fixed policy, roadmap motion and visibility are deterministic
functions of prior actions and observed values. Hence, the event of
following a particular history that ends in $(q,t,y)$ is measurable with
respect to the variables in $\operatorname{dom}(y)$; if another variable
had been sensed, it would belong to that domain. Conditioning on the full
history therefore adds no condition on the variables in $A$.
The observed factors cancel from the conditional probability, so the
result depends on $y$ only through which variables remain unobserved. In
particular, each unobserved probability remains $p_j$.

Certification, schedule feasibility, and observation visibility are
functions of $(q,t,y)$. Motion along a certified schedule is deterministic
and reveals nothing before its endpoint, so its endpoint, cost, and next
observation law also depend only on this state and the selected action. The
primitive state and action spaces are finite, and every primitive step
advances time. Backward induction from $T_{\max}$ therefore selects a
deterministic minimizing primitive action at every state of finite value.
If no proper continuation exists, the value is infinity.
\end{proof}

\begin{lemma}[Macro-action losslessness]
\label{lem:macro_lossless}
Assume the macro-action generator includes every applicable endpoint-time
pair. From any reachable state, every proper deterministic Markov primitive
policy induces a StochSIPP macro-action policy with the same goal-arrival time
for every supported complete status assignment consistent with that state.
Conversely, expanding each stored schedule induces a primitive policy with the
same pathwise arrival time.
\end{lemma}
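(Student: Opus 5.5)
We argue pathwise. Fix a reachable decision state $s_0=(q,t,y)$ and a supported complete status assignment $x$ consistent with $y$, meaning $x$ agrees with $y$ on $\operatorname{dom}(y)$ and with every probability-zero or probability-one declaration. For a proper deterministic Markov primitive policy $\pi$, Lemma~\ref{lem:sufficiency} lets us regard $\pi$ as a map from post-observation states $(q,t,y)$ to edges. Under $x$, the primitive execution from $s_0$ is a deterministic sequence of primitive steps, each followed by mandatory sensing. We cut this sequence at the arrival indices $i$ where either $q_i=q_g$ or $\mathcal{N}(q_i,t_i,y_i)\neq\emptyset$. Between consecutive cut points $y$ does not change, because no new variable is visible. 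The first step is to show that each segment, viewed as a scheduled motion from a decision state $(q_i,t_i,y_i)$ to $(q_j,t_j,y_i)$, satisfies applicability conditions (i)--(iii):
\begin{itemize}
\item condition (i) holds because every primitive step chosen by $\pi$ is applicable, so its edge and destination intervals are certified under the current $y$, which equals $y_i$ along the segment;
\item condition (ii) holds because the cut was placed at the \emph{first} nonempty observation or goal arrival;
\item condition (iii) holds by the definition of the cut.
\end{itemize}
Each segment also has $m\geq1$ and $t_j>t_i$, since durations are positive. Complete endpoint generation then places the pair $(q_j,t_j)$ in $\mathcal{A}((q_i,t_i,y_i))$, possibly with a different stored representative schedule.

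Next we define the induced macro policy: at decision state $s'=(q_i,t_i,y_i)$ it selects the endpoint pair reached by $\pi$'s segment from $s'$. The main obstacle is showing this is well defined independently of $x$. The segment from $s'$ is determined by $\pi$ together with the statuses used along it, and those statuses are all certified, hence fixed by $y_i$ and the deterministic declarations. By the Markov property the segment is therefore a function of $s'$ alone, so every assignment $x$ that reaches $s'$ induces the same choice.

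Executing the stored representative instead of $\pi$'s own segment yields the same endpoint $(q_j,t_j)$ and the same information $y_i$. The next observation set $\mathcal{N}(q_j,t_j,y_i)$, and hence the realized outcome under $x$, is therefore identical. By induction on the number of segments (finite, because time strictly increases and is bounded by $T_{\max}$), the macro execution under $x$ visits the same sequence of decision states at the same times and reaches $q_g$ at the same time $t_g$. Properness transfers, because every positive-probability branch corresponds to some supported $x$.

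For the converse, take a macro policy and expand each stored schedule $\sigma(s,a)$ into its primitive edges. The history-dependent primitive policy plays the current schedule's next edge. Each edge is applicable by condition (i), and the mandatory sensing at internal arrivals observes nothing by condition (ii), so the information state is unchanged. The pathwise arrival time is therefore $t_m$ of the final schedule on each branch. Combining both directions pathwise and taking expectations under the independent law, with Lemma~\ref{lem:sufficiency} allowing restriction to Markov primitive policies, gives $J_{\mathrm{mac}}^*(s)=J_{\mathrm{prim}}^*(s)$.
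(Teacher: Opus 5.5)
Your proposal is correct and takes essentially the same route as the paper. Both cut each primitive execution at the first new observation event or goal arrival, then use complete endpoint generation and the fact that a representative schedule with the same endpoint preserves cost and the next observation law; for the converse, a finite-memory primitive controller replays the stored schedules, and your explicit checks of conditions (i)--(iii) and of the well-definedness of the induced macro policy simply spell out steps the paper leaves implicit.
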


\begin{proof}
Fix a reachable decision state and follow the primitive policy until the
first new observation event or goal arrival. Every primitive edge and
destination occupancy in this segment is certified safe. By construction,
every internal vertex--time pair has an empty newly observed set, and the
endpoint is the first pair with a nonempty set or the goal. The segment is
therefore an applicable macro-action. Its endpoint pair appears in
$\mathcal{A}(s)$ by complete generation.

If the generator stores a different certified schedule for the same
endpoint pair, the replacement has the same elapsed cost. It also has the
same next observation distribution because that distribution depends only
on the endpoint, time, and $y$. Repeating this replacement between
successive observation events preserves every branch's observation
sequence and goal-arrival time. Conversely, each stored schedule is a
finite sequence of applicable primitive edges. A primitive controller can
retain its current schedule as finite memory and execute that sequence
before consulting the next policy action. This preserves the pathwise cost
in both directions. Lemma~\ref{lem:sufficiency} supplies an optimal
deterministic Markov primitive policy whenever a proper primitive policy
exists. Therefore,
$J_{\mathrm{mac}}^*(s)=J_{\mathrm{prim}}^*(s)$ for every reachable state.
\end{proof}

\begin{proposition}[Policy correspondence]
\label{prop:policy_correspondence}
For every reachable decision state, proper deterministic Markov
macro-action policies and proper solution subgraphs have the same
conditional expected costs.
\end{proposition}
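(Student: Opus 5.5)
The plan is to build an explicit bijection between the two objects and then compare values by backward induction on the remaining horizon $T_{\max}-t$. Given a proper deterministic Markov macro policy $\pi$ and a reachable decision state $s$, I will define the subgraph $G_\pi(s)$. It contains the OR node $s$ and the AND node $(s,\pi(s))$, then every outcome successor $s_{\pi(s)}^\omega$ for $\omega\in\Omega^+(\bar{s}_{\pi(s)})$, and it recurses. OR nodes are cached by $(q,t,y)$ and AND nodes by $(s,a)$, and $\pi$ is Markov, so a state reached along several histories receives a single selected action. Hence $G_\pi(s)$ is a well-defined solution subgraph of the cached DAG: one action at every reachable non-goal OR node and all outcomes of every retained AND node. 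Acyclicity follows because every action has $t_a>t$. Conversely, a solution subgraph assigns exactly one action to each of its reachable non-goal OR nodes, which defines a Markov macro policy on the states that policy can reach. Its values at unreachable states are irrelevant to conditional expected cost. Complete action and outcome generation guarantees that every action $\pi$ may choose, and every positive-probability outcome, appears in the graph. Properness transfers in both directions, since the goal-reaching leaves of the subgraph are exactly the goal arrivals of the policy on each branch.

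Next I define the value of a solution subgraph recursively. A goal node has value $0$. An AND node $(s,a)$ has value $c(s,a)+\sum_{\omega}O(\omega\mid\bar{s}_a)V(s_a^\omega)$, and a non-goal OR node takes the value of its selected AND child. This is Eq.~\ref{eq:bellman-q} with the minimum of Eq.~\ref{eq:bellman-j} replaced by the fixed choice. I then prove $J^\pi(s')=V(s')$ for every $s'$ in the subgraph by strong induction on $T_{\max}-t'$.

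\emph{Base case.} Goal nodes have $J^\pi=0=V$. Also, a non-goal state with $t'=T_{\max}$ cannot occur in a proper policy, because no action is available there.

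\emph{Inductive step.} Lemma~\ref{lem:sufficiency} says the conditional law of the unobserved variables given the history is the product law determined by $y'$. Executing $\pi(s')$ therefore deterministically costs $c(s',\pi(s'))$ and yields outcome $\omega$ with probability $O(\omega\mid\bar{s}_{\pi(s')})$. By the tower property, $J^\pi(s')=c+\sum_\omega O(\omega\mid\cdot)\,J^\pi(s^\omega)$. Each successor has strictly larger time, so the induction hypothesis gives $J^\pi(s^\omega)=V(s^\omega)$, and the two recursions coincide. Since every outcome probability is positive and the policy is proper, all quantities involved are finite and no extended-real subtleties arise.

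The main obstacle is the well-definedness of the correspondence under caching. A subgraph node shared by different histories must receive a single action, and that is consistent only because the policy is Markov and the node key $(q,t,y)$ captures everything relevant. I would cite Lemma~\ref{lem:sufficiency} for precisely this point: the future observation law does not depend on the history by which $(q,t,y)$ is reached, so the conditional expected cost at a shared node is the same from every parent.

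A second point needs care: the outcome sets in the search graph must match the events the policy faces. Both are indexed by assignments to $\mathcal{N}(q_a,t_a,y)$, and each such assignment has positive probability. The proof then closes by applying the same argument at the chance root, with initial arcs weighted by $O(\omega_0\mid q_s,0,y_\emptyset)$, which gives equality of $J_0$ as well.
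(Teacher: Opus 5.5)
Your proposal is correct and follows essentially the same route as the paper. Both use induction on the remaining horizon $T_{\max}-t$. Complete generation guarantees that the policy's chosen action appears as an AND child carrying every positive-probability outcome, so the fixed-action form of Eq.~\ref{eq:bellman-q} coincides with the policy's own recursion, and the reverse direction comes from reading off the retained action. Your explicit treatment of shared cached nodes via the Markov property, and of the one-step recursion via Lemma~\ref{lem:sufficiency} and the tower property, fills in details the paper's short proof leaves implicit.
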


\begin{proof}
We use induction on the remaining horizon $T_{\max}-t$. At a goal state, the
policy and terminal OR node both have value zero. At a non-goal state, a
policy selects one action, represented by its AND child because generation
is complete. That child contains every positive-probability observation.
Each successor has strictly less remaining horizon, so the induction
hypothesis applies. Equation~\ref{eq:bellman-q} adds the same action cost and
the same probability-weighted successor values. Retaining the selected
action at every reachable OR node produces a solution subgraph with the
same value. Reading the retained action from a solution subgraph gives the
reverse construction by the same induction.
\end{proof}

\subsection{Bounds and Pruning}

\begin{proposition}[Admissible SIPP and action bounds]
\label{prop:admissible_bounds}
For every reachable state $s=(q,t,y)$,
\[
    h^{\mathrm{opt}}(q,t)
    \leq J^*(s)
    \leq h^{\mathrm{rob}}(q,t).
\]
If every successor satisfies
$\ell(s_a^\omega)\leq J^*(s_a^\omega)\leq u(s_a^\omega)$, then every
generated action satisfies
\[
    \underline{Q}(s,a)
    \leq Q^*(s,a)
    \leq \overline{Q}(s,a).
\]
\end{proposition}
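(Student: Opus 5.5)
The plan is to prove the two state-level inequalities separately, each by a pathwise argument. The action-level inequalities then follow by linearity from Eq.~\ref{eq:bellman-q}. Throughout, I use Lemma~\ref{lem:macro_lossless} to pass between macro and primitive policies. With it, every proper macro policy from $s$ expands to a primitive policy with the same pathwise arrival times, so I may argue at the level of primitive edge sequences.

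\textbf{Lower bound.} If $J^*(s)=\infty$ there is nothing to show, so fix an optimal proper policy $\pi$ from $s$.
\begin{itemize}
\item[(i)] Fix any complete status assignment consistent with $y$. The execution of $\pi$ then produces a primitive schedule from $(q,t)$ to $(q_g,t_g)$. By the applicability conditions, every edge interval and destination-vertex interval it uses is certified safe under the current $y$. Certified safe means either probability one, which is kept safe in the optimistic roadmap, or uncertain, which is treated as safe there. No used interval has probability zero, since such an interval is never certified.
\item[(ii)] So this schedule is a feasible timed path in the optimistic roadmap starting from the occupied pair $(q,t)$. The start pair is admitted by the query convention, because the current vertex interval is certified.
\item[(iii)] Hence $t_g\ge A_q^{\mathrm{opt}}(t)$, using correctness of any-start-time SIPP as an exact earliest-arrival oracle. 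Intervals are exact time sets on integer times, so SIPP's interval abstraction is lossless.
\item[(iv)] Taking expectations of $t_g-t\ge h^{\mathrm{opt}}(q,t)$ over the positive-probability branches gives $J^*(s)\ge h^{\mathrm{opt}}(q,t)$.
\end{itemize}

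\textbf{Upper bound.} If $h^{\mathrm{rob}}(q,t)=\infty$ the bound is trivial. Otherwise take the earliest robust path $P$ from $(q,t)$. After the start pair it uses only probability-one intervals, so each step is certified under every $y$. I then build a macro policy that follows $P$, splitting it into macro-actions at every internal arrival $(q_i,t_i)$ where $\mathcal{N}(q_i,t_i,y)\neq\emptyset$ (or at the goal), and continues along $P$ after each observation regardless of the outcome.

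The main obstacle is checking that each segment is a legal element of $\mathcal{A}$ under the evolving $y$. Condition (i) holds because every interval used has probability one. Condition (ii) holds by the choice of cut points. Condition (iii) holds at each endpoint by construction. There is one subtlety: $P$ may pass through $q_g$ before its end. I would handle this by truncating $P$ at its first goal visit, which can only decrease arrival time. Also, any observation along the way does not alter probability-one statuses, so the continuation stays applicable in every branch. This yields a proper policy with deterministic cost at most $h^{\mathrm{rob}}(q,t)$, and therefore $J^*(s)\le h^{\mathrm{rob}}(q,t)$.

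\textbf{Action bounds.} Assume every successor satisfies $\ell(s_a^\omega)\le J^*(s_a^\omega)\le u(s_a^\omega)$. Each weight $O(\omega\mid\bar s_a)$ is positive. Multiplying the successor bounds by these weights, summing over $\Omega^+(\bar s_a)$, and adding $c(s,a)$ compares Eqs.~\ref{eq:action-lower} and~\ref{eq:action-upper} termwise with Eq.~\ref{eq:bellman-q}. In extended-real arithmetic, monotonicity still holds when some terms are $\infty$, since all coefficients are positive. This gives $\underline{Q}(s,a)\le Q^*(s,a)\le\overline{Q}(s,a)$.
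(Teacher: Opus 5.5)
Your proposal is correct and follows essentially the same route as the paper: a pathwise comparison of each proper-policy branch with the optimistic earliest arrival time, the robust route split at mandatory observation events as a feasible witness policy, and a probability-weighted termwise comparison with the Bellman equation for $Q^*$ to get the action bounds. Your extra checks only make explicit details that the paper's proof leaves implicit. These are applicability of each split segment under the evolving $y$, truncation at the first goal visit, and monotonicity in extended-real arithmetic.
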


\begin{proof}
If no proper policy exists, $J^*(s)=\infty$ and the optimistic lower bound
is immediate. Otherwise, fix an arbitrary proper policy and one of its
observation branches. Every motion on the branch is available in the
optimistic roadmap because that roadmap admits all uncertain intervals. Its
earliest goal-arrival time cannot exceed the branch's arrival time.
Subtracting $t$, taking the expectation, and then minimizing over proper
policies gives $h^{\mathrm{opt}}(q,t)\leq J^*(s)$.

If the robust query is finite, its route uses only probability-one
intervals after the certified current occupancy. The robot can follow this
route for every observation outcome; splitting it at mandatory observation
events gives a proper witness policy of cost
$h^{\mathrm{rob}}(q,t)$. Hence
$J^*(s)\leq h^{\mathrm{rob}}(q,t)$. If the query is infeasible, the upper
bound is infinity and remains valid.

For an action $a$, write
$o_\omega=O(\omega\mid\bar{s}_a)$. Multiply each successor inequality by
the nonnegative $o_\omega$ and sum over $\omega$. The outcome probabilities
sum to one. Adding the same finite $c(s,a)$ gives
\[
\begin{aligned}
 \underline{Q}(s,a)
 &=c(s,a)+\sum_\omega o_\omega\ell(s_a^\omega)\\
 &\leq Q^*(s,a)\\
 &\leq c(s,a)+\sum_\omega o_\omega u(s_a^\omega)
 =\overline{Q}(s,a),
\end{aligned}
\]
which is exactly the stated action-bound inequality.
\end{proof}

\begin{proposition}[Sound pruning and OR backups]
\label{prop:sound_pruning}
Suppose the node and action bounds are valid, and every finite upper bound
is backed by a feasible witness policy. If
$\underline{Q}(s,a)>B(s)$, then $a$ is not optimal. After any sequence of
such removals, Eqs.~\ref{eq:or-lower} and~\ref{eq:or-upper} satisfy
\[
    \ell(s)\leq J^*(s)\leq u(s).
\]
\end{proposition}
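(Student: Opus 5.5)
The argument has two parts: first show that $B(s)$ is an upper bound on $J^*(s)$, then show that pruning never removes every optimal action, so that the minima in Eqs.~\ref{eq:or-lower} and~\ref{eq:or-upper} still bracket $J^*(s)$. We may assume, as the statement allows, that every finite $\overline{Q}(s,a)$ and every finite $h^{\mathrm{rob}}(q,t)$ is realized by a proper witness policy.

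\textbf{Step 1: $J^*(s)\le B(s)$.} By Eq.~\ref{eq:witness}, $B(s)$ is a minimum over $h^{\mathrm{rob}}(q,t)$ and the active values $\overline{Q}(s,a)$.
\begin{itemize}
\item If $B(s)=\infty$, the inequality is trivial.
\item The term $h^{\mathrm{rob}}(q,t)$ dominates $J^*(s)$ by Proposition~\ref{prop:admissible_bounds}.
\item Each finite $\overline{Q}(s,a)$ dominates $Q^*(s,a)$, again by Proposition~\ref{prop:admissible_bounds}. This uses validity of the successor upper bounds $u(s_a^\omega)$.
\end{itemize}
Since $J^*(s)=\min_{a\in\mathcal{A}(s)}Q^*(s,a)\le Q^*(s,a)$, every term of $B(s)$ is at least $J^*(s)$. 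Equivalently, each finite term is the value of a proper witness policy, and $J^*(s)$ is the minimum over proper policies.

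\textbf{Step 2: a pruned action is not optimal.} If $\underline{Q}(s,a)>B(s)$, the lower action bound gives
\[
Q^*(s,a)\ge\underline{Q}(s,a)>B(s)\ge J^*(s).
\]
So $Q^*(s,a)\neq J^*(s)$, and by Eq.~\ref{eq:bellman-j} the action $a$ cannot attain the minimum.

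\textbf{Step 3: an optimal action survives any sequence of removals.} This step needs care, because $B(s)$ is recomputed over $\mathcal{A}_{\mathrm{act}}(s)$ and therefore changes as actions are removed. I would argue by induction on the number of removals, with invariant $J^*(s)\le B(s)$.
\begin{itemize}
\item The invariant holds at every stage because $h^{\mathrm{rob}}(q,t)$ always appears in Eq.~\ref{eq:witness}, whatever the active set is.
\item Each removal therefore satisfies Step 2 and deletes only a non-optimal action. Hence, if $J^*(s)<\infty$, some $a^\dagger$ with $Q^*(s,a^\dagger)=J^*(s)$ remains active.
\item If $J^*(s)=\infty$, the conclusion is read with extended-real conventions. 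Then $B(s)=\infty$, so no action satisfies the strict pruning inequality and nothing is removed.
\end{itemize}

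\textbf{Step 4: the backed-up OR bounds.}
\begin{itemize}
\item \emph{Lower bound.} Proposition~\ref{prop:admissible_bounds} gives $h^{\mathrm{opt}}(q,t)\le J^*(s)$. Also $\min_{a\in\mathcal{A}_{\mathrm{act}}}\underline{Q}(s,a)\le\underline{Q}(s,a^\dagger)\le Q^*(s,a^\dagger)=J^*(s)$. The maximum of two lower bounds is a lower bound. If $J^*(s)=\infty$, the lower bound is trivial.
\item \emph{Upper bound.} Each active $\overline{Q}(s,a)\ge Q^*(s,a)\ge J^*(s)$ and $h^{\mathrm{rob}}(q,t)\ge J^*(s)$, so their minimum is at least $J^*(s)$. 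The empty minimum is infinity, which is also consistent.
\end{itemize}
This gives $\ell(s)\le J^*(s)\le u(s)$ at the OR node.

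The main obstacle is Step 3. The witness shrinks as pruning proceeds, so soundness must be checked against the current witness rather than a fixed one. The invariant $J^*(s)\le B(s)$, which survives every removal because of the always-present $h^{\mathrm{rob}}$ term, resolves this.
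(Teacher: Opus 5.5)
Your proof is correct and follows the paper's argument. You show $J^*(s)\le B(s)$, use the strict chain $Q^*(s,a)\ge\underline{Q}(s,a)>B(s)\ge J^*(s)$ to exclude pruned actions, and let the minima over surviving actions, together with the $\max/\min$ against $h^{\mathrm{opt}}$ and $h^{\mathrm{rob}}$, preserve the bracket; the paper additionally closes with a reverse-topological induction from initialization. One small correction: in Step 3 the invariant $J^*(s)\le B(s)$ does not survive removals because $h^{\mathrm{rob}}(q,t)$ is always present, but because your Step 1 shows every term of $B(s)$ individually dominates $J^*(s)$, so deleting terms from the minimum can only raise it.
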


\begin{proof}
Every finite term in $B(s)$ is the value of a feasible policy: the first
term has the robust-route witness, and a finite
$\overline{Q}(s,a')$ combines the action with a feasible witness at each
successor. Therefore,
\[
    J^*(s)\leq B(s).
\]
Proposition~\ref{prop:admissible_bounds} gives
$Q^*(s,a)\geq\underline{Q}(s,a)$. For a pruned action,
\[
    Q^*(s,a)
    \geq\underline{Q}(s,a)
    >B(s)
    \geq J^*(s).
\]
It cannot attain the optimal value. The strict inequality also shows that
an optimal action is never removed, so
\[
    J^*(s)
    =
    \min_{a\in\mathcal{A}_{\mathrm{act}}(s)}Q^*(s,a).
\]
Taking the minimum of the active action lower bounds gives a lower bound on
this equality, and taking the minimum of their upper bounds gives an upper
bound. The maximum with the independent optimistic lower bound and the
minimum with the robust upper-bound witness preserve the inequalities in
Eqs.~\ref{eq:or-lower} and~\ref{eq:or-upper}. Initialization supplies the
base case, so reverse-topological backups preserve the invariant.
\end{proof}

\subsection{Safety and Finite-Roadmap Optimality}

\begin{theorem}[Primitive-step safety]
\label{thm:safety}
Assume the realized interval statuses follow the supplied support,
observations report them exactly, and execution follows the planned timing.
Every execution that starts at $q_s$ and uses only applicable StochSIPP
actions is collision-free.
\end{theorem}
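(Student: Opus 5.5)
The proof is an induction over primitive steps that maintains two invariants along any execution using only applicable StochSIPP actions. \emph{Knowledge consistency}: every assignment $y(j)$ equals the realized status $X_j$ in the current episode. \emph{Safe occupancy}: the robot's current vertex interval $(q,\kappa(q,t))$ is truly safe at the current time $t$.

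At the start, $y_\emptyset$ contains no claims, so consistency holds vacuously. Occupancy holds because $p_{q_s}^{\kappa(q_s,0)}=1$, and a probability-one interval lies in the supplied support and is therefore realized safe. Each sensing event extends $y$ by $\omega$ on $\mathcal{N}(q,t,y)$. Since observations report realized statuses exactly, consistency is preserved.

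For the inductive step, fix a decision state $(q,t,y)$ satisfying both invariants and an applicable action with schedule $\sigma(s,a)=((q_0,t_0),e_0,\ldots,(q_m,t_m))$. Condition (i) of applicability says that each used edge interval $(e_i,\kappa(e_i,t_i))$ and destination interval $(q_{i+1},\kappa(q_{i+1},t_{i+1}))$ is certified safe under $y$. Certified safety splits into two cases.
\begin{itemize}
\item If $p=1$, the variable is safe in every realization permitted by the supplied support.
\item If the variable is uncertain with $y(z,k)=1$, consistency gives $X_z^k=1$.
\end{itemize}
In both cases the realized status is safe. By the definition of edge safety events, the complete execution of $e_i$ over $[t_i,t_i+W(e_i))$ is then collision-free, and the robot may occupy $q_{i+1}$ at $t_{i+1}$. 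Planned timing guarantees that $e_i$ actually departs at $t_i$, so the status actually used is the certified one. For wait self-loops, the loop event covers the whole unit interval and the destination-vertex event covers the endpoint. Waiting therefore introduces no unexamined instants.

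Internal arrivals perform no new sensing, by condition (ii). Consistency is therefore untouched within the macro-action, and occupancy passes from $q_i$ to $q_{i+1}$. At the endpoint, sensing extends $y$ consistently, and the next action is again applicable under the new $y$, so the induction continues. Executions terminate because time strictly increases and is bounded by $T_{\max}$. Every primitive step of the execution is then collision-free.

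The main obstacle is making sure certification is evaluated against statuses that were actually observed before the motion, rather than assumed. Specifically, the $y$ used to certify $e_i$ at an internal arrival must already contain every uncertain variable in $\mathcal{L}(e_i,t_i)$. I would handle this with the sensing assumption $\mathcal{L}(e,t)\cap\mathcal{U}\subseteq\mathcal{R}(q,t)$ together with condition (ii). At an internal arrival $\mathcal{N}(q_i,t_i,y)=\emptyset$, so every variable visible there, including those used by $e_i$, already lies in $\operatorname{dom}(y)$ and was sensed earlier. At the first step, the mandatory sensing at $(q_0,t_0)$ before selecting the action accomplishes the same. An unobserved uncertain variable is never certified, so no step relies on an unverified status. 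This also closes the gap that the statement assumes only support-correctness rather than exact probabilities: only the supports of probability-one variables and exact observations enter the argument.
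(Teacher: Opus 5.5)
Your proposal is correct and follows the paper's proof essentially step for step. It uses the same two invariants (consistency of $y$ with the realized statuses, and safe current occupancy), the same initialization via $y_\emptyset$ and $p_{q_s}^{\kappa(q_s,0)}=1$, the same split of certified safety into probability-one support versus an observed value of one, the same wait self-loop argument, and the same induction over primitive steps. One small remark on your last paragraph: the lookahead-sensing argument is not actually needed for safety, because certifying an uncertain interval already requires it to lie in $\operatorname{dom}(y)$ with value one, so consistency alone closes each step (the paper likewise mentions the sensing condition only in passing).
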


\begin{proof}
Maintain two invariants: $y$ agrees with every status observed in the
current episode, and the robot safely occupies its current vertex. They hold
initially because $y_\emptyset$ contains no claims and the start occupancy
has probability one. Exact sensing preserves the first invariant.

Every interval used by an applicable schedule is certified safe. A
probability-one variable is safe under the supplied support, and an
uncertain certified variable was observed with value one. Hence every used
edge execution and destination occupancy is safe in the current episode.
The lookahead sensing condition exposes these variables before departure.
For a wait, the self-loop event covers the complete waiting period and the
destination event covers its endpoint. Induction over all primitive steps
preserves safe occupancy and proves collision freedom.
\end{proof}

\begin{theorem}[Finite-roadmap optimality]
\label{thm:optimality}
Under the assumptions of Theorem~\ref{thm:safety}, additionally assume the
interval probabilities are exact and independent, action and outcome
generation are complete, and Algorithm~\ref{alg:stochsipp} uses the stated
caches, backups, solved tests, and strict pruning rule. Then the algorithm
terminates after finitely many expansions. If a proper primitive policy
exists on the roadmap, it returns a collision-free policy with minimum
expected goal-arrival time among all such policies. Otherwise, it returns
failure.
\end{theorem}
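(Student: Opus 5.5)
The proof has three parts, in this order: termination, correctness of the returned value and policy, and the transfer from macro policies to primitive policies together with safety.

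\emph{Termination.} The set of OR-state keys $(q,t,y)$ is finite: it has at most $|V|(T_{\max}+1)3^M$ elements, since each uncertain variable is unobserved, safe, or blocked. Each state has finitely many endpoint pairs $(q',t')$ with $t<t'\leq T_{\max}$, and each endpoint has finitely many outcomes. Every action strictly increases $t$, so the cached graph is a finite DAG. Each iteration marks one previously unexpanded OR node as expanded, and nodes are never unexpanded, so there are at most finitely many iterations.

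It remains to show that while $r$ is unsolved the frontier $\mathcal{F}$ is nonempty; otherwise the loop could stall. I would argue by contradiction using reverse topological order. Suppose every OR node on the lower-bound solution graph is expanded or solved. Take a deepest unsolved node on that graph. All of its children on the selected action are solved, so the selected AND node is solved. Because a solved child has $\ell=u$, the selected action has $\underline{Q}=\overline{Q}$. It also minimizes $\underline{Q}$ over the active actions, so Eq.~\ref{eq:or-solved} holds and the fixed-point backup would have marked the node solved. This is a contradiction.

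The subtle point is that the fixed point must be reached after every expansion. This holds because the backups move finitely many monotone stored values over a finite DAG. For the monotonicity I would cite the max/min forms of Eqs.~\ref{eq:or-lower}--\ref{eq:or-upper} together with the facts that pruning only removes actions and $B$ only decreases.

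\emph{Value correctness.} By Proposition~\ref{prop:admissible_bounds} and Proposition~\ref{prop:sound_pruning}, applied inductively in reverse topological order, every node satisfies $\ell\leq J^*\leq u$ at all times, and pruning never removes an optimal action. Next I would show, again by reverse induction, that every solved node has $\ell(s)=J^*(s)=u(s)$ and that its certifying action $a^\star$ attains $J^*(s)$.

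\begin{itemize}
\item If a node is solved at infinity because $h^{\mathrm{opt}}=\infty$ or $\mathcal{A}(s)=\emptyset$, then $J^*=\infty$ by Proposition~\ref{prop:admissible_bounds} or by Eq.~\ref{eq:bellman-j}.
\item Otherwise $a^\star$ is solved, so $Q^*(s,a^\star)=\underline{Q}=\overline{Q}$ by the induction hypothesis and Eq.~\ref{eq:bellman-q}. Every other active action satisfies $Q^*(s,a)\geq\underline{Q}(s,a)\geq Q^*(s,a^\star)$. Pruned actions are not optimal. Hence $a^\star$ is optimal and the node's value is exact.
\end{itemize}

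The chance root is a weighted sum of its children, so $\ell(r)=u(r)=J_0^*$ when it is solved. If $J_0^*=\infty$, no proper macro policy exists, and the algorithm reports failure.

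\emph{Transfer and safety.} If $J_0^*$ is finite, following the certifying actions from $r$ gives a proper solution subgraph with value $J_0^*$. By Proposition~\ref{prop:policy_correspondence} this is a macro policy of equal expected cost. By Lemma~\ref{lem:macro_lossless} and Lemma~\ref{lem:sufficiency}, $J^*_{\mathrm{mac}}=J^*_{\mathrm{prim}}$, and expanding the stored schedules yields a primitive policy with the same arrival times. Hence the returned policy is optimal among all primitive policies, and a proper primitive policy exists exactly when the algorithm does not report failure. Every executed step belongs to an applicable action, so Theorem~\ref{thm:safety} gives collision-freedom.

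I expect the main obstacle to be the progress argument in the termination step: showing that an unsolved root always exposes an unexpanded node on the lower-bound solution graph. This requires tying the tie-breaking rule to the solved test. In particular, the selected action may tie on $\underline{Q}$ with another action, and the proof must ensure Eq.~\ref{eq:or-solved} still fires, which I would handle by using the same fixed total order for both selection and certification.
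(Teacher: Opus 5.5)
Your proposal is correct and follows essentially the same route as the paper. It uses a finite DAG over the at most $|V|(T_{\max}+1)3^M$ keys, and a progress argument that an unsolved root always exposes an unexpanded OR node on the lower-bound solution graph; your ``deepest unsolved node'' contradiction is the paper's descent through selected actions and unsolved children, stated in extremal form. It then uses reverse induction showing that solved values and certifying actions are exact, and transfers the result through Lemmas~\ref{lem:sufficiency} and~\ref{lem:macro_lossless}, policy correspondence, and Theorem~\ref{thm:safety}.

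The tie-breaking obstacle you anticipate does not actually arise. The solved test in Eq.~\ref{eq:or-solved} uses a non-strict inequality, so it fires for any solved minimizer of $\underline{Q}$ whether or not other active actions tie with it.
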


\begin{proof}
There are at most
\[
    |V|(T_{\max}+1)3^M
\]
OR-state keys because every uncertain variable is unobserved, observed safe,
or observed blocked. Each state has finitely many endpoint pairs. Every
action strictly increases time, so the cached action--observation graph is a
finite DAG, and each OR node is expanded at most once. At an expanded node
with a nonempty complete action set, an optimal action exists because the
set is finite. Sound strict pruning never removes that action, so
$\mathcal{A}_{\mathrm{act}}(s)$ remains nonempty.

We next show that an expansion is available whenever the chance root is
unsolved. An unsolved chance or AND node has an unsolved child by its solved
test. At an expanded unsolved OR node, the selected minimum-lower-bound
action cannot be solved. If it were solved, its exact value would be no
larger than every alternative lower bound, and
Eq.~\ref{eq:or-solved} would mark the OR node solved. Following selected
actions and unsolved children in the finite DAG must therefore reach an
unexpanded OR node. A complete actionless node and an optimistic dead end
are already solved and cannot terminate this descent. Thus, every loop
iteration expands a previously unexpanded node; only finitely many
iterations are possible.

Propositions~\ref{prop:admissible_bounds} and~\ref{prop:sound_pruning} show
that every stored bound remains valid and no optimal action is removed. A
solved AND node is exact because all its positive-probability successors are
exact and Eq.~\ref{eq:bellman-q} computes their expectation. At a solved OR
node, the certifying action is exact and its value is no greater than the
lower bound, hence no greater than the true value, of every active
alternative. Pruned actions are nonoptimal. The certifying action therefore
satisfies Eq.~\ref{eq:bellman-j}. Reverse induction over the DAG shows that
all solved values, including the solved chance root, are exact.

Lemma~\ref{lem:sufficiency} permits an optimal deterministic Markov
primitive policy. Lemma~\ref{lem:macro_lossless} places an equal-cost policy
in the generated macro-action class, and
Proposition~\ref{prop:policy_correspondence} transfers the exact solution
subgraph back to that policy. If the exact root value is infinity, no proper
policy exists and the algorithm reports failure. Otherwise, the returned
policy is collision-free by Theorem~\ref{thm:safety}.
\end{proof}

\paragraph{Scope of the guarantee.}
The result is exact for the supplied finite roadmap, horizon, independent
interval variables, and sensing model. It does not imply global optimality
in the continuous configuration space. Correlated interval statuses,
incorrectly merged time ranges, sensing errors, or timing deviations violate
the model assumptions. Incorrect probability values invalidate
expected-cost optimality; safety depends on exact status support, sensing,
and timing rather than probability calibration.

\section{Extended Experimental Results}

This section reports the disaggregated values behind the main-paper
summaries. It does not add experiments or modify the recorded results.

\subsection{Additional Experimental and Implementation Details}

The reported warehouse-like environments are fifty maps
(\texttt{risky\_variant\_000}--\texttt{049}), each a 50-vertex roadmap in an
$800\times800$ workspace with five static circular obstacles. Each vertex
is connected to its six nearest neighbors, with bridge edges added when
needed for connectivity. The start and goal are near opposite corners.

Each environment contains three moving obstacles. Every obstacle
independently follows one of two candidate trajectories with equal prior
probability, producing $2^3=8$ trajectory-level realizations per map. One
StochSIPP policy is computed before execution and evaluated over 1,000
sampled realizations. The reported collision threshold is
$\texttt{ROB\_RAD}+\texttt{OBST\_RAD}=1.0$, matching the clearance used to
construct the safe intervals.

Trajectory-level sampling is not the independent interval model analyzed
in the proofs. One sampled obstacle trajectory can jointly determine
several edge/time statuses, so those statuses can be correlated even when
the obstacle-level trajectory choices are independent. The planner is
supplied interval marginals and, under its independence model, leaves the
unobserved marginals unchanged after sensing. Consequently, the
finite-roadmap expected-cost
optimality result does not transfer to these correlated evaluation worlds.
The primitive-step safety result applies only if the trajectory-to-interval
conversion yields support-correct, episode-fixed statuses, exact observations,
and execution that follows the planned timing.

Each risky-route map isolates one permanently ambiguous $p=0.5$ gate in the
reported route choice. This isolates the measured value of reacting to that
gate, but it does not establish independence for every status induced by the
trajectory generator. We therefore make no empirical claim of optimal expected
cost under the trajectory-level distribution.

The comparison includes seven fixed-path baselines, none of which revises its
path after observations. Five discount edge cost by safety probability during
search and differ only in the admission threshold: \emph{det-A*} admits every
edge, \emph{stoch-det-A*} every interval of nonzero safety probability,
\emph{0.5-A*} and \emph{0.7-A*} only intervals meeting those thresholds at
every relevant step, and \emph{1.0-A*} only intervals of safety probability
one. \emph{Most-likely} plans against the single most probable joint outcome of
all moving obstacles. \emph{Reactive} commits to the shortest route but halts
in place rather than proceeding once it observes a blocked edge ahead, so it
never collides but does not always reach the goal; its reported success rate is
therefore a goal-reaching rate rather than $1-{}$collision rate.

Two additional structured scenarios, \emph{2-route} and \emph{3-route}, place
one and two independent ambiguous gates between start and goal rather than a
randomly generated layout. They are used both as safety scenarios and as
ablation scenarios below.

\todo{Report the experiment hardware, software versions, random-seed
propagation, exact baseline cost formula, and per-cell scaling timeout
duration before submission.}
\todo{Document how sampled trajectories produce the interval partition,
episode-fixed statuses, visibility sets, and observations used at execution.}

\subsection{Two-Dimensional Scenarios}

The first set shows five representative randomly generated roadmaps from the
two-dimensional evaluation, illustrating the range of obstacle layouts and
route connectivity used in the experiments.

\begin{figure*}[p]
  \centering
  \begin{minipage}[t]{0.37\textwidth}
    \centering
    \includegraphics[width=\linewidth]{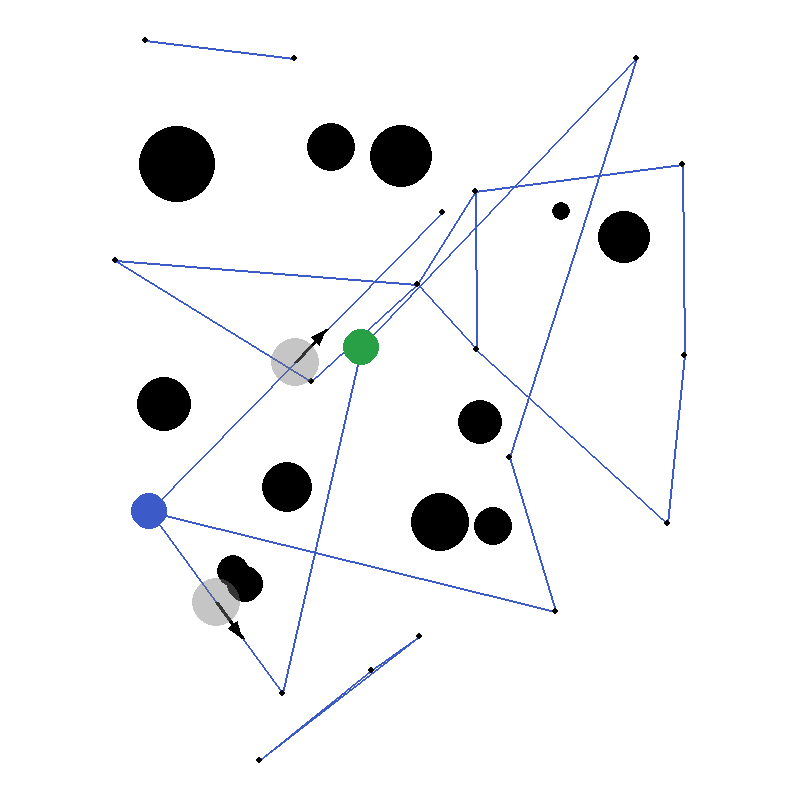}
    \scriptsize (a) Scenario 1
  \end{minipage}\hfill
  \begin{minipage}[t]{0.37\textwidth}
    \centering
    \includegraphics[width=\linewidth]{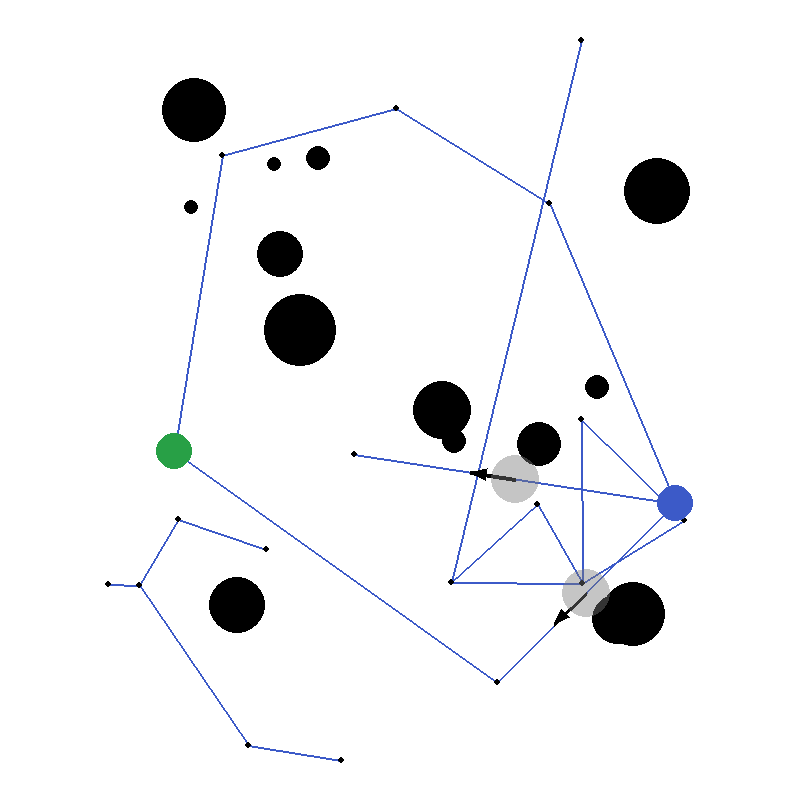}
    \scriptsize (b) Scenario 2
  \end{minipage}

  \smallskip
  \begin{minipage}[t]{0.37\textwidth}
    \centering
    \includegraphics[width=\linewidth]{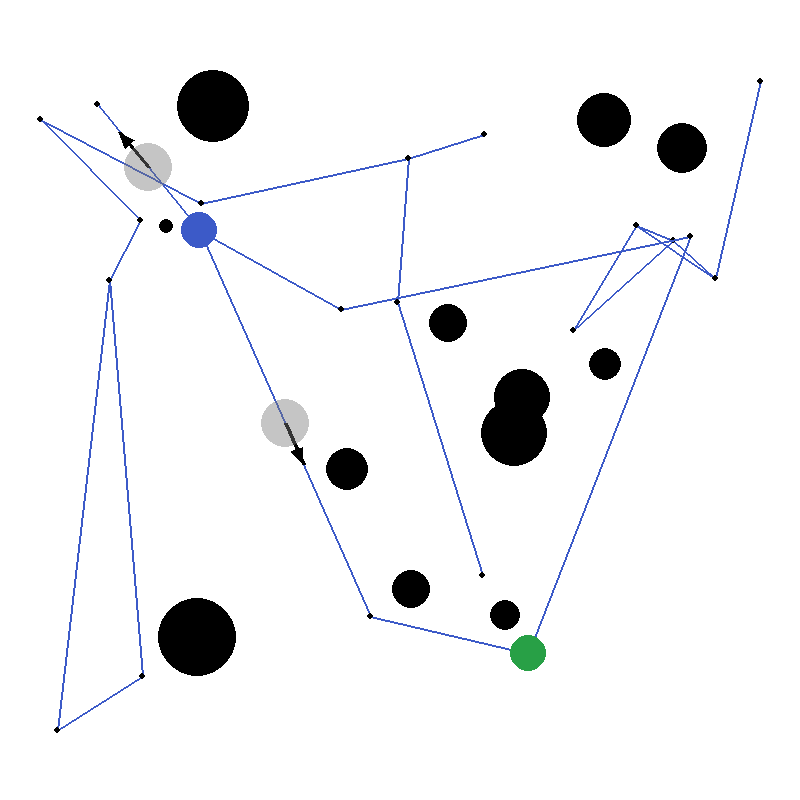}
    \scriptsize (c) Scenario 3
  \end{minipage}\hfill
  \begin{minipage}[t]{0.37\textwidth}
    \centering
    \includegraphics[width=\linewidth]{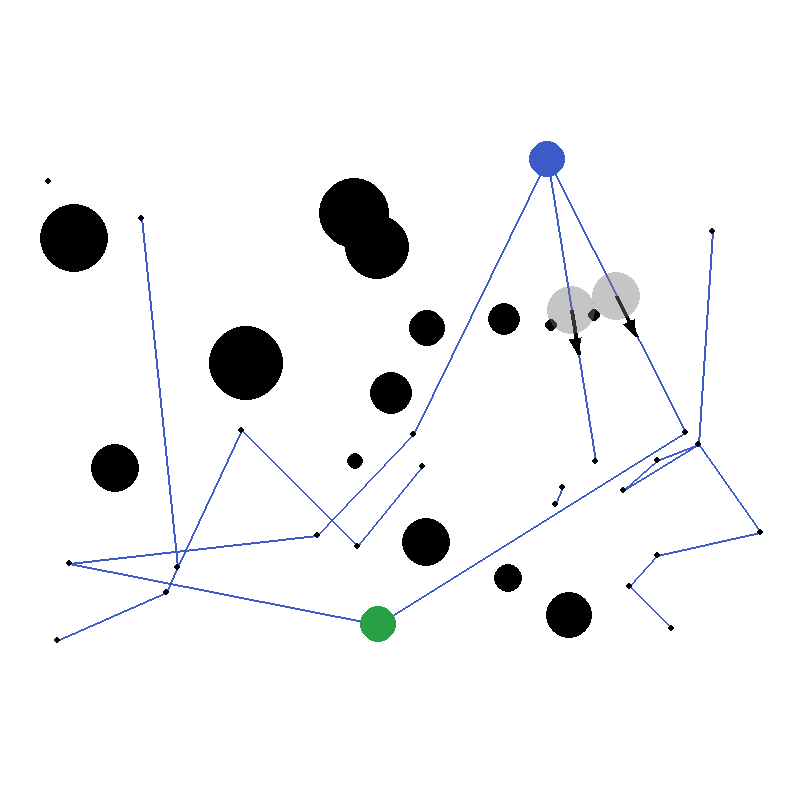}
    \scriptsize (d) Scenario 4
  \end{minipage}

  \smallskip
  \begin{minipage}[t]{0.37\textwidth}
    \centering
    \includegraphics[width=\linewidth]{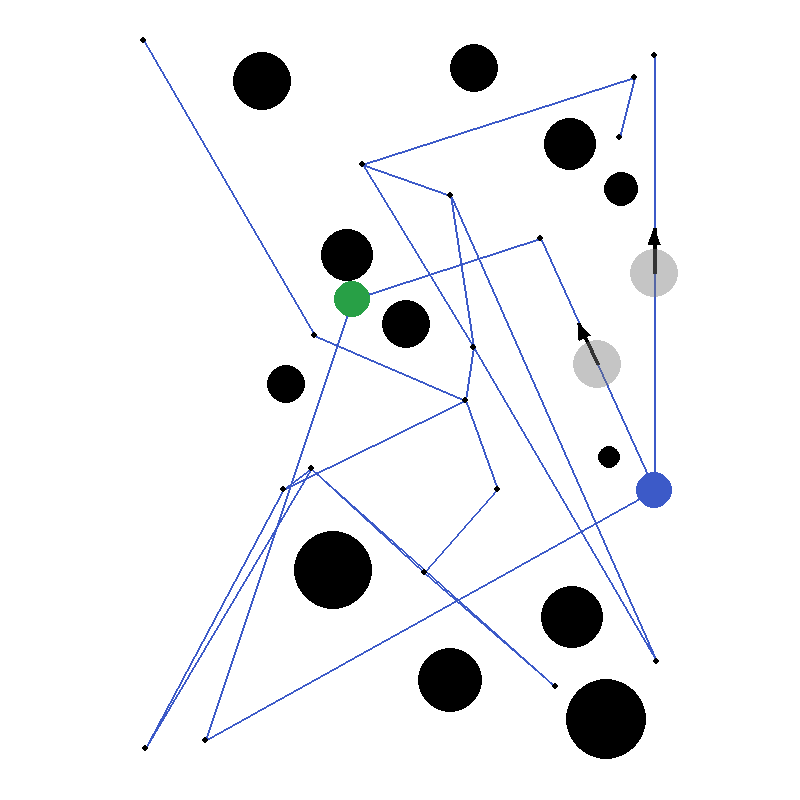}
    \scriptsize (e) Scenario 5
  \end{minipage}
  \caption{Five representative scenarios used in the two-dimensional
  experiments. Across all two-dimensional scenarios, the blue circle marks
  the start, the green circle marks the goal, and the gray circles mark the
  moving obstacles' potential starting locations. Arrows indicate their
  potential directions of movement.}
  \label{fig:supp-scenarios}
\end{figure*}
\FloatBarrier

The second set shows two controlled gated scenarios that concentrate
uncertainty at route choices, highlighting the planner's ability to defer a
route commitment until the relevant gate is observed.

\begin{figure}[H]
  \centering
  \begin{minipage}[t]{0.82\columnwidth}
    \centering
    \includegraphics[width=\linewidth]{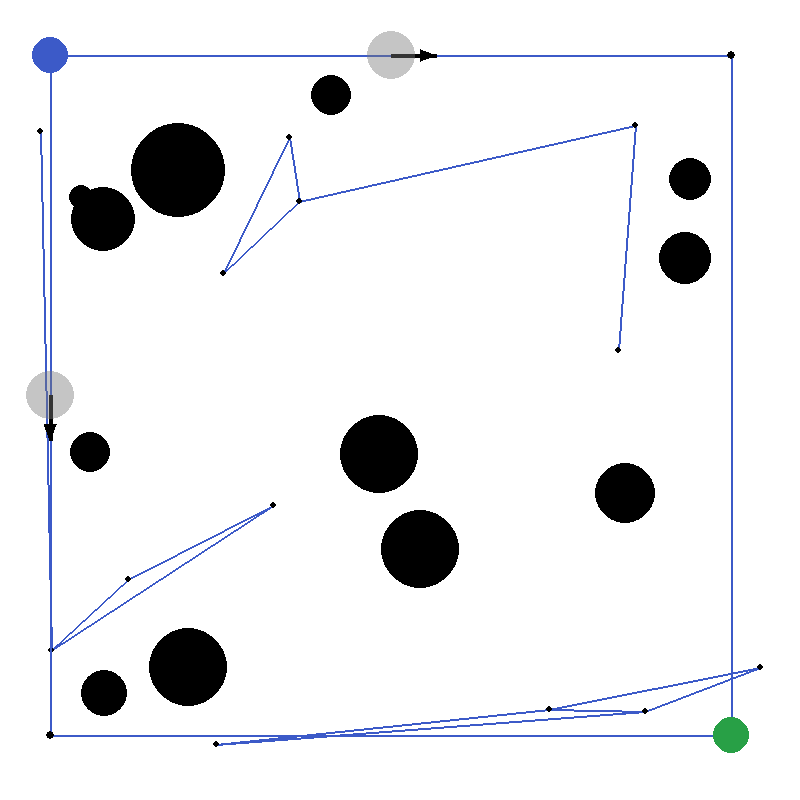}
    \scriptsize (a) Gated Scenario 1
  \end{minipage}

  \medskip
  \begin{minipage}[t]{0.82\columnwidth}
    \centering
    \includegraphics[width=\linewidth]{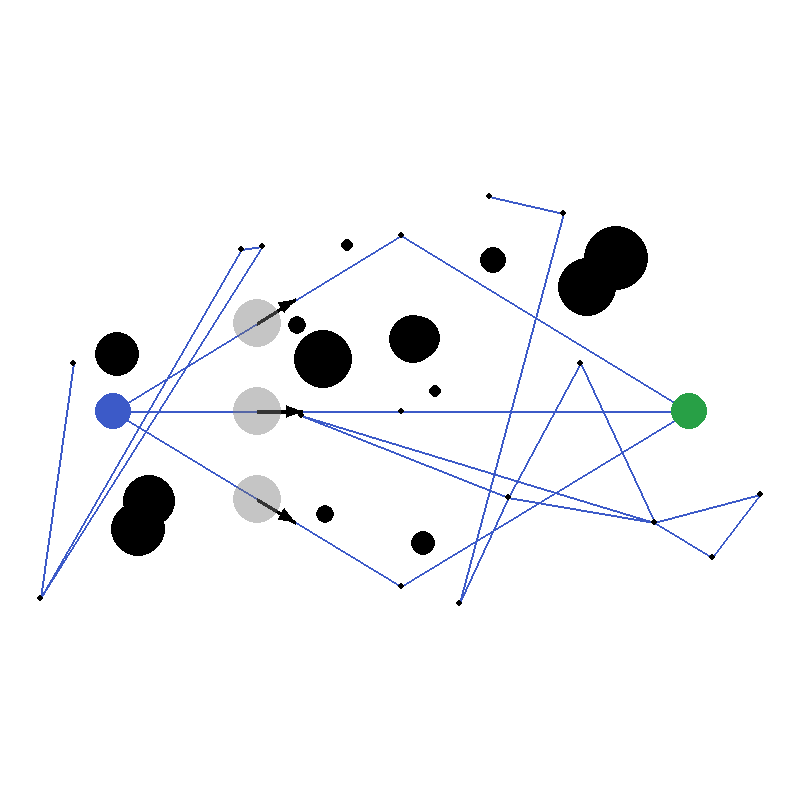}
    \scriptsize (b) Gated Scenario 2
  \end{minipage}
  \caption{The two gated scenarios used in the two-dimensional experiments,
  following the same visual conventions as the representative scenarios.}
  \label{fig:supp-gated-scenarios}
\end{figure}

\subsection{Controlled Single-Gate Results}

Table~\ref{tab:supp-results} reports the values averaged over the fifty maps,
and Table~\ref{tab:supp-paired} reports the same comparison as per-map paired
differences from StochSIPP. On every one of the fifty maps the seven
baselines split into exactly two clusters that each commit to an identical
route, which is why the paired differences within a cluster are identical and
their intervals have zero width where the underlying values are constant.

\begin{table}[t]
\centering
\small
\setlength{\tabcolsep}{4pt}
\begin{tabular}{lcccc}
\toprule
Method & Mean Arr. & All-Pass & Success & Solve (s) \\
\midrule
StochSIPP & 11.6 & 7.9 & 100.0\% & 0.033 \\
1.0-A* & 14.1 & 14.1 & 100.0\% & 0.000 \\
0.7-A* & 14.1 & 14.1 & 100.0\% & 0.000 \\
most-likely & 14.1 & 14.1 & 100.0\% & 0.000 \\
det-A* & 6.9 & 6.9 & 49.7\% & 0.000 \\
stoch-det-A* & 6.9 & 6.9 & 49.7\% & 0.000 \\
0.5-A* & 6.9 & 6.9 & 49.7\% & 0.000 \\
reactive & 6.9 & 6.9 & 49.7\% & 0.000 \\
\bottomrule
\end{tabular}
\caption{Mean arrival time, mean arrival restricted to the trials in which
every method reached the goal without colliding, and success rate
($1-{}$collision rate, or goal-reaching rate for \emph{reactive}), averaged
over 1,000 trials per map across the fifty risky maps.}
\label{tab:supp-results}
\end{table}

\begin{table}[t]
\centering
\small
\setlength{\tabcolsep}{4pt}
\begin{tabular}{lccc}
\toprule
Cluster & $\Delta$Mean Arr. & $\Delta$All-Pass & $\Delta$Success \\
\midrule
Cautious & $-2.6\pm0.4$ & $-6.2\pm0.8$ & $0.0\pm0.0$ \\
Risky & $+4.6\pm0.4$ & $+1.0\pm0.0$ & $+50.3\pm0.0$ \\
\bottomrule
\end{tabular}
\caption{Paired differences $\Delta=\text{StochSIPP}-\text{baseline}$ across the
fifty maps, mean $\pm$ half-width of the $95\%$ confidence interval.
The cautious cluster is \emph{0.7-A*}, \emph{1.0-A*}, and \emph{most-likely};
the risky cluster is \emph{det-A*}, \emph{stoch-det-A*}, \emph{0.5-A*}, and
\emph{reactive}. Members of a cluster produce identical differences.
$\Delta$Success is in percentage points. $\Delta$Solve is
$0.033\pm0.004$~s against every baseline.}
\label{tab:supp-paired}
\end{table}

Relative to the cautious cluster, the only other methods that succeed in every
trial, StochSIPP's unrestricted mean arrival is $7\%$ to $31\%$ lower across the
fifty maps ($17\%$ on average). Relative to the risky cluster, the recorded
arrival times of those baselines are $30\%$ to $161\%$ lower than StochSIPP's
($72\%$ on average), but that speed is bought by gambling on the ambiguous gate:
\emph{det-A*}, \emph{stoch-det-A*}, and \emph{0.5-A*} collide on $50.3\%$ of
trials, and \emph{reactive} instead fails to reach the goal on the same fraction.

The unrestricted Mean Arr.\ statistic is the relevant cost summary for the
methods that succeed in every trial. For a baseline that collides, its finite
recorded duration is not the expected goal-arrival time of a proper policy and
should not be compared as though failure reached the goal. The All-Pass column
is also conditional on favorable worlds: selecting only worlds in which the
risk-taking baselines succeed selects worlds in which the short gate is open.
This is why StochSIPP's own All-Pass value ($7.9$) is well below its unrestricted
mean ($11.6$) even though its policy does not change.

\todo{Paste the per-map rows for all fifty maps from the results file; only the
fifty-map aggregates and paired differences are recorded in the manuscript
source.}

\subsection{Gated Scenarios}

Table~\ref{tab:supp-gates} reports the two structured gate scenarios. With only
ambiguous gates standing between start and goal, no route clears the $0.7$ or
$1.0$ admission threshold, so \emph{0.7-A*} and \emph{1.0-A*} return no plan at
all rather than a risky one. StochSIPP reaches the goal on every trial of both
scenarios because it disambiguates each gate rather than ruling it out in
advance. \emph{Most-likely} changes cluster relative to the single-gate maps:
independently taking each obstacle's more probable hypothesis no longer
guarantees a jointly safe route once two gates are involved, and it records a
success rate slightly above the other risky-cluster baselines
($50.3\%$ versus $49.7\%$ on \emph{2-route}, $66.4\%$ versus $65.7\%$ on
\emph{3-route}).

\begin{table}[t]
\centering
\small
\setlength{\tabcolsep}{4pt}
\begin{tabular}{lcccc}
\toprule
& \multicolumn{2}{c}{2-route} & \multicolumn{2}{c}{3-route} \\
Method & Succ. & Arr. & Succ. & Arr. \\
\midrule
StochSIPP & 100.0\% & 7.0 & 100.0\% & 7.0 \\
most-likely & 50.3\% & 6.0 & 66.4\% & 6.0 \\
det-A* & 49.7\% & 6.0 & 65.7\% & 6.0 \\
stoch-det-A* & 49.7\% & 6.0 & 65.7\% & 6.0 \\
0.5-A* & 49.7\% & 6.0 & 65.7\% & 6.0 \\
reactive & 49.7\% & 6.0 & 65.7\% & 6.0 \\
0.7-A* & 0.0\% & -- & 0.0\% & -- \\
1.0-A* & 0.0\% & -- & 0.0\% & -- \\
\bottomrule
\end{tabular}
\caption{Success rate and mean arrival on the \emph{2-route} (one ambiguous
gate) and \emph{3-route} (two ambiguous gates) scenarios, 1,000 trials each.
\emph{0.7-A*} and \emph{1.0-A*} admit no route to the goal on either scenario
and therefore return no plan.}
\label{tab:supp-gates}
\end{table}

\subsection{Search-Aid Ablation}

The ablation compares the combined search configurations used in the
implementation. \emph{Full} includes the SIPP-derived bounds, informed
probability-aware expected-value ordering estimate, and cached arrival-time
functions. \emph{Full-LRTA} adds the
implementation's per-branch refinement. \emph{No-Heuristic} removes the
bounds, pruning, and informed ordering. Because several components change
together, this experiment measures the configurations as packages; it does
not isolate the causal contribution of each component.

\begin{table}[h]
\centering
\small
\begin{tabular}{llccc}
\toprule
Seed & Variant & Expansions & Nodes & Solve (s) \\
\midrule
\multirow{3}{*}{3} & full & 3 & 26 & 0.04 \\
 & no-heur & 11487 & 40584 & 4.43 \\
 & full-lrta & 3 & 26 & 0.57 \\
\midrule
\multirow{3}{*}{7} & full & 7 & 76 & 0.12 \\
 & no-heur & 24927 & 90030 & 10.20 \\
 & full-lrta & 7 & 76 & 1.71 \\
\midrule
\multirow{3}{*}{8} & full & 3 & 54 & 0.06 \\
 & no-heur & 3695 & 13390 & 1.42 \\
 & full-lrta & 3 & 54 & 0.97 \\
\midrule
\multirow{3}{*}{10} & full & 3 & 56 & 0.06 \\
 & no-heur & 5039 & 18134 & 1.99 \\
 & full-lrta & 3 & 56 & 0.92 \\
\midrule
\multirow{3}{*}{22} & full & 3 & 46 & 0.08 \\
 & no-heur & 7175 & 25408 & 3.19 \\
 & full-lrta & 3 & 46 & 1.05 \\
\bottomrule
\end{tabular}
\caption{Search effort on the five random seeds under \emph{full},
\emph{full-lrta}, and \emph{no-heur}, one deterministic solve per seed and
variant.}
\label{tab:supp-ablations}
\end{table}

All three variants returned the same recorded root value on each random seed:
$J_0^*=7.0$ for seeds 3 and 7, and $J_0^*=6.0$ for seeds 8, 10, and 22, and
they agree with each other on \emph{2-route} and \emph{3-route} as well.
\emph{Full} expanded 3--7 OR nodes on the five random seeds, whereas
\emph{No-Heuristic} expanded 3,695--24,927, so on these instances
\emph{No-Heuristic} expanded 1,232--3,829 times as many OR nodes.
\emph{Full-LRTA} expanded the same nodes as \emph{Full}; using the reported
rounded times, it took 13.1--16.2 times as long.

The two structured gate scenarios are much larger for the search:
\emph{Full} expands 3,837 OR nodes on \emph{2-route} and 13,585 on
\emph{3-route}, and its advantage over \emph{No-Heuristic} narrows to
$16\times$ and to under $4\times$, respectively. Pooling all seven scenarios
gives the paired differences in Table~\ref{tab:supp-ablation-paired}. The
intervals are wide because the seven scenarios span three orders of magnitude
in raw effort, yet every interval still excludes zero.

\begin{table}[h]
\centering
\small
\setlength{\tabcolsep}{3pt}
\begin{tabular}{lccc}
\toprule
Variant & $\Delta$Expands & $\Delta$Nodes & $\Delta$Solve (s) \\
\midrule
no-heur & $-21210\pm18712$ & $-62320\pm47816$ & $-4.15\pm2.71$ \\
full-lrta & $0\pm0$ & $0\pm0$ & $-1.12\pm0.44$ \\
\bottomrule
\end{tabular}
\caption{Paired differences $\Delta=\text{\emph{full}}-\text{variant}$ across
the seven ablation scenarios (seeds 3, 7, 8, 10, 22 plus \emph{2-route} and
\emph{3-route}), mean $\pm$ half-width of the $95\%$ confidence interval
($t_{0.975,6}=2.45$). Negative values favor \emph{full}.}
\label{tab:supp-ablation-paired}
\end{table}

\todo{Paste the exact \emph{no-heur} and \emph{full-lrta} rows for
\emph{2-route} and \emph{3-route}; only \emph{full}'s expansion counts and the
pooled differences are recorded in the manuscript source.}

\subsection{Simultaneous-Ambiguity Scaling}

The scaling study uses a star scenario with a safe start-to-goal edge and $k$
independent moving obstacles, each adding one permanently ambiguous dead-end
spur off the start. Thus $k$ ambiguous statuses are presented together while the
roadmap has $k+2$ vertices, so any change in solve time comes from the number of
simultaneously ambiguous statuses rather than from map size.
Table~\ref{tab:supp-scaling} preserves every reported cell.

\begin{table}[h]
\centering
\small
\setlength{\tabcolsep}{4pt}
\begin{tabular}{rrrrrr}
\toprule
Ambig.\ statuses & Build (s) & Solve (s) & Expands & Nodes & A* (s) \\
\midrule
2 & 0.003 & 0.002 & 29 & 116 & 0.000 \\
3 & 0.005 & 0.016 & 175 & 796 & 0.000 \\
4 & 0.006 & 0.179 & 1,401 & 7,140 & 0.000 \\
5 & 0.010 & 2.459 & 14,011 & 79,084 & 0.000 \\
6 & 0.012 & 39.639 & 168,133 & 1,041,172 & 0.000 \\
\bottomrule
\end{tabular}
\caption{Recorded build time, StochSIPP solve time, search effort, and
A* planning time as the number of ambiguous statuses presented together grows.
Settings beyond six did not resolve within the sweep's timeout.}
\todo{Record which settings beyond six were attempted and the wall-clock
timeout duration.}
\label{tab:supp-scaling}
\end{table}

Across the five completed points, build time increased only from $0.003$~s to
$0.012$~s, while solve time increased from $0.002$~s to $39.639$~s. Each
additional ambiguous status multiplies solve time by a larger factor than the
one before: roughly $8\times$ from 2 to 3 statuses, $11\times$ from 3 to 4,
$14\times$ from 4 to 5, and $16\times$ from 5 to 6. Expansions and constructed
nodes compound in the same way, and the A* baselines, which never branch on
outcomes, plan in well under a millisecond at every setting. Five completed
settings are insufficient to identify an asymptotic growth law or to attribute
the observed growth exclusively to one implementation mechanism.

\subsection{Gazebo Illustration}

\begin{figure*}[h]
  \centering
  \begin{minipage}[t]{0.49\textwidth}
    \centering
    \includegraphics[width=\linewidth]{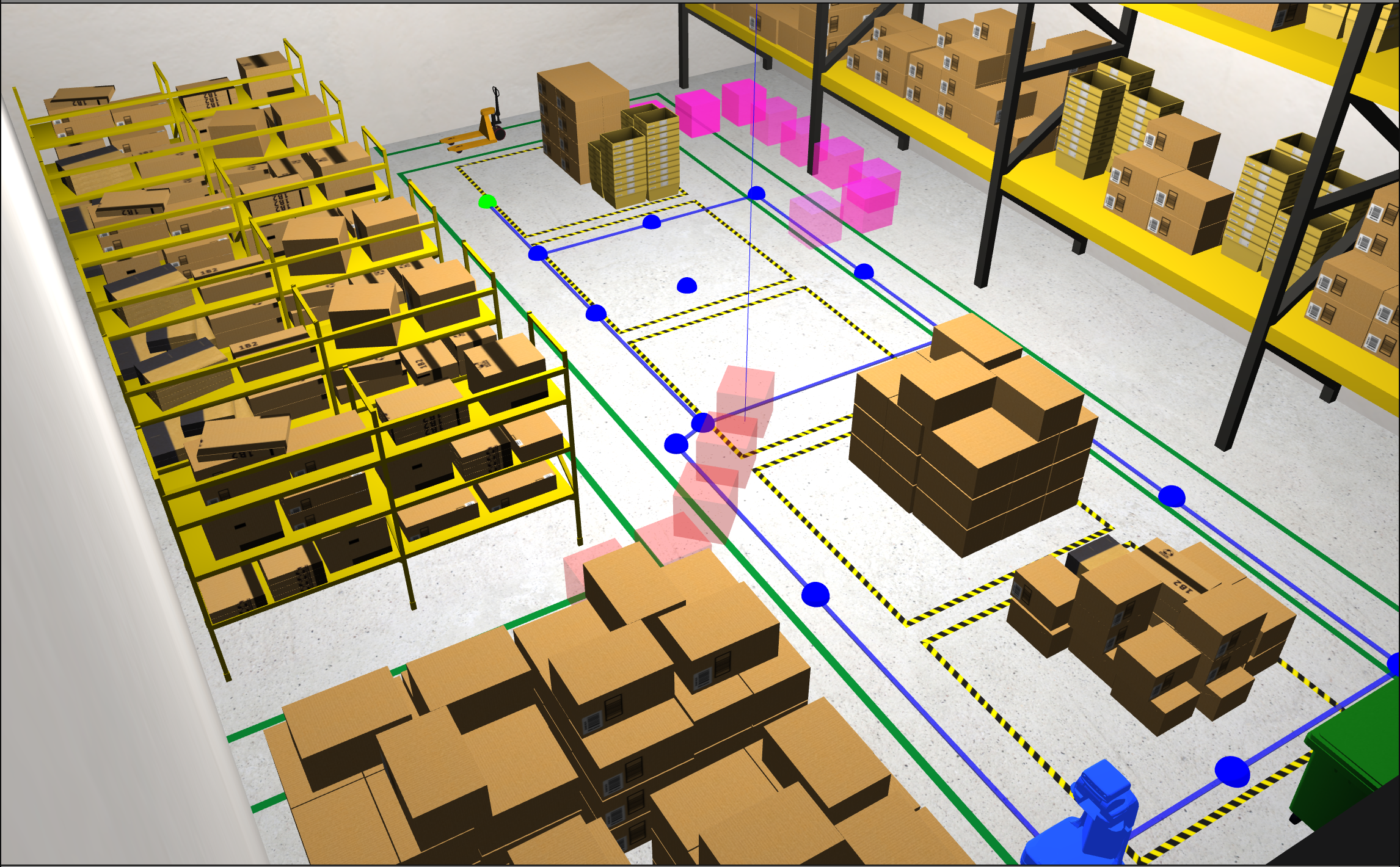}
    \scriptsize (a) Motion pattern 1
  \end{minipage}\hfill
  \begin{minipage}[t]{0.49\textwidth}
    \centering
    \includegraphics[width=\linewidth]{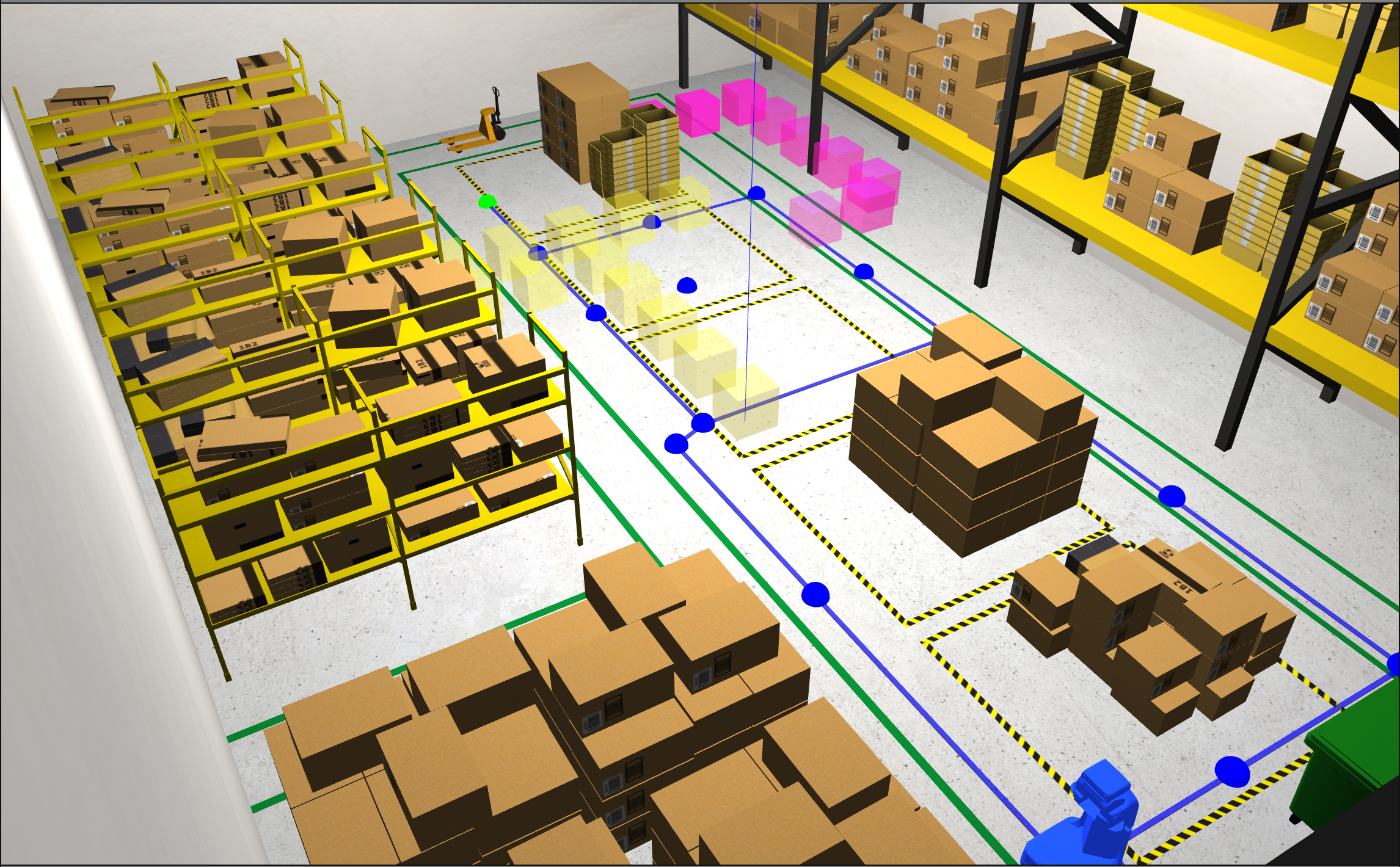}
    \scriptsize (b) Motion pattern 2
  \end{minipage}
  \caption{The two obstacle-motion patterns used for the warehouse results.}
  \label{fig:supp-warehouse-motion-patterns}
\end{figure*}

The Gazebo rendering of a representative warehouse policy execution appears as
Figure~\ref{fig:supp-warehouse-motion-patterns}. It is a qualitative illustration of the
policy-execution setup and provides no evidence for collision rate,
optimality, or runtime.

\begin{table}[t]
  \centering
  \caption{Planner comparison on the \texttt{warehouse} scenario over
    1000 trials (2 distinct worlds, collision radius $1.2$\,m). Both routes
    from the start to the goal cross a permanently ambiguous gate, so the
    certainty-demanding baselines cannot plan at all, while the permissive
    ones must commit blindly and collide roughly half the time.
    CAO$^{*}$ is the only method that reaches the goal in every trial
    without colliding.}
  \label{tab:warehouse}
  \small
  \setlength{\tabcolsep}{4pt}
  \begin{tabular}{lrrrrl}
    \toprule
    Method
      & \shortstack[r]{Mean\\arrival}
      & \shortstack[r]{Mean arrival\\(all pass)}
      & \shortstack[r]{Collision\\rate}
      & \shortstack[r]{Wall clock\\(s)}
      & Status \\
    \midrule
    \textbf{CAO$^{*}$} & \textbf{19.0} & --- & \textbf{0.0\%} & 0.090     & \textbf{ok} \\
    reactive           & 14.0          & --- & 0.0\%          & $<\!0.001$ & no goal in 503/1000 \\
    det-A$^{*}$        & 14.0          & --- & 50.3\%         & $<\!0.001$ & ok \\
    stoch-det-A$^{*}$  & 14.0          & --- & 50.3\%         & $<\!0.001$ & ok \\
    0.5-A$^{*}$        & 22.0          & --- & 49.7\%         & $<\!0.001$ & ok \\
    0.7-A$^{*}$        & ---           & --- & ---            & $<\!0.001$ & no path \\
    1.0-A$^{*}$        & ---           & --- & ---            & $<\!0.001$ & no path \\
    most-likely        & 22.0          & --- & 49.7\%         & $<\!0.001$ & ok \\
    \bottomrule
  \end{tabular}
\end{table}

% Check whether the conference requires a reproducibility checklist to be included in the paper.
% If so, you can uncomment the following line and ajust the path to include it.
% \input{ReproducibilityChecklist.tex}

\end{document}